\newtheorem{thm}{Theorem}
\newtheorem{lem}{Lemma}
\newtheorem{cor}{Corollary}
\newtheorem{prop}{Proposition}
\newtheorem{defn}{Definition}
\newtheorem*{rem*}{Remark}
\global\long\def\mca{\mathcal{A}}
\global\long\def\H{\mathcal{H}}
\global\long\def\D{D}
\global\long\def\d{d}
\global\long\def\gpp{GSBM\xspace}
\global\long\def\ca{c_{\mca}}
\global\long\def\cac{c_{\mca^c}}
\newcommand{\Omegat}[1]{\ensuremath{\tilde{\Omega}\left(#1\right)}}
\DeclareMathOperator{\polylog}{polylog}
\begin{document}

\title{Improved Graph Clustering}

\author{Yudong Chen,
Sujay Sanghavi, and
Huan Xu
\thanks{The work of Y. Chen was supported by NSF grant EECS-1056028 and DTRA grant HDTRA 1-08-0029. The work of S. Sanghavi was supported by the DTRA Young investigator award and NSF grants 1302435, 1017525 and 0954059.  The work of H. Xu was partially supported by the Ministry of Education of Singapore through AcRF Tier Two grant R-265-000-443-112. The material in this paper was presented in part under the title ``Clustering Sparse Graphs'' at the Neural Information Processing Systems Conference, Lake Tahoe, Nevada, United States, 2012.}

\thanks{Y. Chen is with the Department of Electrical Engineering and Computer Sciences at the University of California at Berkeley, Berkeley, CA 94720, USA. S. Sanghavi is with the Department of Electrical and Computer Engineering, The University of Texas at Austin, Austin, TX 78712, USA. H. Xu is with the Department of Mechanical Engineering, National University of Singapore, 9 Engineering Drive 1, Singapore 117575, Singapore. (e-mail: yudong.chen@eecs.berkeley.edu, sanghavi@mail.utexas.edu, mpexuh@nus.edu.sg)}
}

\markboth{}%
{Chen \MakeLowercase{\textit{et al.}}: Improved Graph Clustering}

\maketitle

\begin{abstract}
Graph clustering involves the task of dividing nodes into clusters, so that the edge density is higher within clusters as opposed to across clusters. A natural, classic and popular statistical setting for evaluating solutions to this problem is the stochastic block model, also referred to as the planted partition model.

In this paper we present a new algorithm---a convexified version of Maximum Likelihood---for graph clustering. We show that, in the classic stochastic block model setting, it outperforms existing methods by polynomial factors when the cluster size is allowed to have general scalings. In fact, it is within logarithmic factors of known lower bounds for spectral methods, and there is evidence suggesting that no polynomial time algorithm would do significantly better.

We then show that this guarantee carries over to a more general extension of the stochastic block model. Our method can handle the settings of semi-random graphs, heterogeneous degree distributions, unequal cluster sizes, unaffiliated nodes, partially observed graphs and planted clique/coloring etc. In particular, our results provide the best exact recovery guarantees to date for the planted partition, planted $ k $-disjoint-cliques and planted noisy coloring models with general cluster sizes; in other settings, we match the best existing results up to logarithmic factors.
\end{abstract}

\section{Introduction}

This paper proposes a new algorithm for the following task: given an undirected unweighted graph, assign the nodes into disjoint clusters so that the density of edges within clusters is higher than the edge density across clusters. Clustering arises in applications such as a community detection, user profiling, link prediction, collaborative filtering etc. In these applications, one is often given as input a set of similarity relationships (either ``1" or ``0") and the goal is to identify groups of similar objects. For example, given the friendship relations on Facebook, one would like to detect tightly connected communities, which is useful for subsequent tasks like customized recommendation and advertisement.

Graphs in modern applications have several characteristics that complicate graph clustering:
\begin{itemize}
\item {\bf Small density gap:} the edge density across clusters is only a small additive or multiplicative factor different from within clusters;
\item {\bf Sparsity:} the graph is overall very sparse even within clusters;
\item {\bf High dimensionality:} the number of clusters may grow unbounded as a function of the number of nodes $ n $, which means the sizes of the clusters can be vanishingly small compared to $ n $;
\item {\bf Unaffiliated nodes:} there may exist a large number of nodes that do not belong to any clusters and are loosely connected to the rest of the graph;
\item {\bf Heterogeneity:} the cluster sizes, node degrees and edge densities may be non-uniform across the graph; edge connections may not be well-modeled by a probabilistic distribution, and there may exist hierarchical cluster structures. 
\end{itemize}

Various large modern datasets and graphs have such characteristics~\citep{fortunato2010community,leskovec2008properties}; examples include the web graph and social graphs of various social networks etc. 
As has been well-recognized, these characteristics make clustering more difficult. 
When the in-cluster and across-cluster edge densities are close or there are many unstructured unaffiliated nodes, the clustering structure is less significant and thus harder to detect. Sparsity further reduces the amount of information and makes the problem noisier. In the high dimensional regime, there are many small clusters, which are easy to lose in the noise. Heterogeneous and non-random structures in the graphs foil many algorithms that otherwise perform well; for example, conventional spectral clustering methods are often known to be not robust to heterogeneity in the graphs~\cite{chaudhuri2012extended,coja2004coloringSemirandom}. Finally, the existence of hierarchical structures and unaffiliated nodes renders many existing algorithms and theoretical results inapplicable, as they fix the number of clusters a priori and force each node to be assigned to a cluster. It is desirable to design an algorithm that can handle all these issues in a principled manner.

\subsection{Our Contributions}

Our algorithmic contribution is a new method for unweighted graph clustering. It is motivated by the maximum-likelihood estimator for the  classical Stochastic Blockmodel~\cite{condon2001algorithms} (also known as the {Planted Partition Model}~\cite{holland83}) for random clustered graphs. In particular, we show that this maximum-likelihood estimator can be written as a linear objective over combinatorial constraints; our algorithm is a convex relaxation of these constraints, yielding a convex program overall. While this is the motivation, it performs well---both in theory and empirically---in settings that are not just the standard stochastic blockmodel.

Our main analytical result in this paper is theoretical guarantees on our algorithm's performance; we study it in a {\em semi-random generalized stochastic blockmodel}. This model generalizes not only the standard stochastic blockmodel and planted partition model, but many other classical planted models including planted $ k $-disjoint-cliques~\cite{ames2010kclique,alon1998hiddenClique}, planted dense subgraph~\cite{mcsherry2001spectralpartitioning}, planted coloring~\cite{alon1997coloring3,coja2004coloringSemirandom} and their semi-random variants~\cite{bollobas2004maxcut,feige2001semirandom,krivelevich2006coloring}. Our main result gives the conditions (as a function of the in/cross-cluster edge densities $ p $ and $ q $, the density gap $ |p-q| $, the minimum cluster size $ K $ and the total number of nodes $ n $) under which our algorithm is guaranteed to recover the ground-truth clustering. When $ p>q $, the key condition reads
\begin{equation}\label{eq:simple_cond}
p-q = \Omega\left(\frac{\sqrt{p(1-q)n}}{K} \right);
\end{equation} 
here all the parameters are allowed to scale with $n$. Note that the condition does not depend explicitly on the number of unaffiliated nodes or the number of clusters. An analogous result holds for $ p<q $.

While the planted and stochastic block models have a rich literature, this single result shows that the performance of our algorithm  matches all existing methods (up to at most logarithmic factors) in exact recovery; moreover, in the cases of the standard planted partition/$k$-disjoint-cliques/noisy-coloring models with general scaling of $ p,q $ and $ K $, we achieve order-wise improvement over existing methods, in the sense that our algorithm succeeds for a much larger range of the parameters. In fact, there is evidence indicating that we are close to the boundary at which any polynomial-time algorithm can be expected to work. The proof for our main theorem is relatively simple, relying only on standard concentration results. Our simulation study supports our theoretic finding, that the proposed method is effective in clustering noisy graphs and outperforms existing methods.

The rest of the paper is organized as follows: Section~\ref{sec:related} provides an overview of related work;  Section~\ref{sec:algo} presents our algorithm; Section~\ref{sec:model} describes the Semi-Random Generalized Stochastic Blockmodel, which is a generalization of the standard stochastic blockmodel, one that allows the modeling of the issues mentioned above; Section~\ref{sec:perf} presents the main results---a performance analysis of our algorithm for the semi-random generalized stochastic blockmodel, and provides a detailed comparison to the existing literature and a discussion of the implications for different special cases; Section~\ref{sec:expts} provides simulation results; the proofs of our theoretic results are given in Sections~\ref{sec:proof_mono} to~\ref{sec:proof_est_pq}; the paper concludes with a discussion in Section~\ref{sec:conclusion}

\subsection{Related Work} \label{sec:related}

The general field of clustering, or even graph clustering, is too vast for a detailed survey here; we focus on the most related threads, and therein too primarily on work which provides analytical guarantees on the resulting algorithms.

\subsubsection{Stochastic block models} 

Also called ``planted models"~\cite{condon2001algorithms,holland83}, these are arguably the most natural random clustered graph models. In the simplest or standard setting, $ n $ nodes are partitioned into disjoint subsets of equal size $ K $ (called the true clusters), and then edges are generated independently and at random, with the probability $p$ of an edge between two nodes in the same cluster higher than the probability $q$ for two nodes in different clusters. The task is to recover the true clusters given the graph. 
The parameters $p,q, K$ and $ n $ typically govern whether an algorithm succeeds in recovery or not. 

\begin{table*}
\caption{Comparison with Literature for the Standard Stochastic Blockmodel
\label{tab:comparison}}
\begin{minipage}{\textwidth}
\begin{center}
\begin{tabular}{rlll}
\hline
\footnotetext{To facilitate a direct comparison, this table specializes some of the results to the case where every underlying cluster is of the same size $K$, and the in/cross-cluster edge probabilities are uniformly $p$ and $q$, with $ p>q $. Some of the algorithms above need this assumption, and some---like ours---do not. In the table we use the soft $ \Omegat{\cdot} $ notation, which ignores log factors.}
Paper & Cluster size $K$ & Density gap $p-q$ & Sparsity $ p $\\
\hline
Boppana (1987) \cite{boppana1987eigenvalues} & $n/2$ & $\tilde{\Omega}\left(\sqrt{\frac{p}{n}}\right)$ & $ \Omegat{\frac{1}{n}} $\\
%\hline
Jerrum et al. (1998) \cite{jerrum1998metropolis} & $n/2$ & $\tilde{\Omega}\left(\frac{1}{n^{1/6-\epsilon}}\right)$ & $\Omegat{n^{1/6-\epsilon}} $\\
%\hline
Condon et al. (2001) \cite{condon2001algorithms} & $\Theta\left(n\right)$ & $\tilde{\Omega}\left(\frac{1}{n^{1/2-\epsilon}}\right)$ & $\Omegat{n^{1/2-\epsilon}} $\\
%\hline
Carson et al. (2001) \cite{carson2001planted} & $n/2$ & $\tilde{\Omega}\left(\sqrt{\frac{p}{n}}\right)$ & $\Omegat{\frac{1}{n}} $\\
%\hline
Feige et al. (2001) \cite{feige2001semirandom} & $n/2$ & $\tilde{\Omega}\left(\sqrt{\frac{p}{n}}\right)$ & $\Omegat{\frac{1}{n}} $\\
%\hline
McSherry (2001) \cite{mcsherry2001spectralpartitioning} & ${\Omega}\left(n^{2/3}\right)$ & $\tilde{\Omega}\left(\sqrt{\frac{pn^2}{K^3}}\right)$ & $\Omegat{\frac{n^2}{K^3}}  $\\
%\hline 
Bollobas (2004) \cite{bollobas2004maxcut}  & $\Theta\left(n\right)$  & $\tilde{\Omega}\left(\sqrt{\frac{q }{n}} \vee \frac{1}{n}\right)$  & $\Omegat{\frac{1}{n}} $\\
%\hline
Giesen et al. (2005) \cite{giesen2005manypartition} & ${\Omega}\left(\sqrt{n}\right)$ & $\tilde{\Omega}\left(\frac{\sqrt{n}}{K}\right)$ & $\Omegat{\frac{\sqrt{n}}{K}}$\\
%\hline
Shamir (2007) \cite{shamir2007improved} & ${\Omega}\left(\sqrt{n}\log n\right)$ & $\tilde{\Omega}\left(\frac{\sqrt{n}}{K}\right)$ & $\Omegat{\frac{\sqrt{n}}{K}}$\\
%\hline
Coja-Oghlan (2010) \cite{coja2010adaptive} & $\Omega(n^{4/5})$ & $\Omegat{\sqrt{\frac{pn^4}{K^5}}}$ & $\Omegat{\frac{n^4}{K^5}}$\\
%hline
Rohe et al. (2011) \cite{rohe2011stochasticblock} &  ${\Omega}\left((n\log n)^{2/3}\right)$ & $\tilde{\Omega}\left(\frac{ n^{1/2}}{K^{3/4}}\right)$  & $ \Omegat{\frac{1}{\sqrt{\log n}}} $\\ 
%\hline
Oymak et al. (2011) \cite{oymak2011finding}& ${\Omega}\left(\sqrt{n}\right)$ & $\tilde{\Omega}\left(\frac{\sqrt{n}}{K}\right)$ & $\Omegat{\frac{\sqrt{n}}{K}}$ \\
%\hline
Chaudhuri et al. (2012) \cite{chaudhuri2012extended} &${\Omega}\left(\sqrt{n\log n}\right)$  & $\tilde{\Omega}\left(\frac{\sqrt{n}}{K}\right)$ & $\Omegat{\frac{\sqrt{n}}{K}}$\\
%\hline
Ames (2012) \cite{ames2012clustering} &${\Omega}\left(\sqrt{n}\right)$ & $\tilde{\Omega}\left(\frac{\sqrt{n}}{K}\right)$ & $\Omegat{\frac{\sqrt{n}}{K}}$\\
%\hline
 &  &   &\\
%\hline
Our result & ${\Omega}\left(\sqrt{n}\right)$ & $\tilde{\Omega}\left(\frac{\sqrt{pn}}{K}\right)$ & $\Omegat{\frac{n}{K^2}}$\\
\hline 
\end{tabular}
\end{center}
\end{minipage}
\label{table:blockmodel}
\end{table*}

There is now a long line of analytical work on stochastic block models; we focus on methods that allow for \emph{exact recovery} (i.e., every node is correctly classified), and summarize the conditions required by known methods in Table~\ref{table:blockmodel}. As can be seen, we improve over existing methods by polynomial factors for general values of $ K $---in particular, when the cluster size satisfies $ K=n^{1-\alpha} $ for any constant $ \alpha>0 $ (which means the number of clusters is growing at the rate $ n/K = n^{\alpha} $.)\footnote{Our comparison focuses on polynomial factors and the setting with general values of $ K $. We note that in the special case of $ K=\Theta(n) $, some existing results (e.g.,\cite{coja2010adaptive}) are better then ours by logarithmic factors.} In addition, as opposed to several of these methods, our method can handle unaffiliated nodes, heterogeneity, hierarchy in clustering etc, and apply to other models including planted clique and planted noisy coloring.

We would like to mention two recent results that appeared after the conference version~\cite{chen2012sparseNIPS} of this paper. The work in~\cite{anandkumar2013tensorMixedCommunity} shows that a computationally efficient tensor decomposition approach succeeds for the standard stochastic blockmodel when $ p-q=\Omega\left({\sqrt{pn}\polylog n}/K\right) $; our guarantee~\eqref{eq:simple_cond} is better by a factor of $ \Theta(\polylog n/\sqrt{1-q})$. Moreover, for the standard planted clique model ($ p=1,q=1/2 $), we only require the clique size to be $ K=\Omega(\sqrt{n}) $, better than their requirement $ K =\tilde{\Omega}(n^{2/3}) $.  Another subsequent work~\cite{ailon2013breaking} considers the setting with  heterogeneous cluster sizes and no unaffiliated nodes, and shows that our algorithm can be combined with an iterative reduction procedure to sequentially recover clusters smaller than is allowed in this paper.

A complimentary line of work has investigated {\em lower bounds} for the stochastic blockmodel; i.e., for what values/scalings of $p,q$ and $K$ it is \emph{not} possible (either for any algorithm, or for any polynomial-time algorithm) to recover the underlying true clusters~\cite{chaudhuri2012extended,decelle2011asymptotic,nadakuditi2012spectra}. We discuss and compare with these two lines of work in more details in the main results section.

\subsubsection{Convex methods for matrix decomposition} 

Our method is related to recent literature on the recovery of low-rank matrices using convex optimization, and in particular the recovery of such matrices from ``sparse" perturbations (i.e., where a fraction of the elements of the low-rank matrix are possibly arbitrarily modified, while others are untouched). Sparse and low-rank matrix decomposition using convex optimization was initiated by~\cite{chandrasekaran2011siam,candes2009robustPCA};  follow-up works~\cite{chen2011isit,li2013constantCorruption} have the current state-of-the-art guarantees on this problem, and~\cite{Jalali2011clustering} applies it directly to graph clustering. 

The method in this paper is Maximum Likelihood, but it can also be viewed as a weighted version of sparse and low-rank matrix decomposition, with {\em different elements of the sparse part penalized differently, based on the given input graph.} There is currently little work or analysis on weighted matrix decomposition; in that sense, while our weights have a natural motivation in our setting, our recovery results are likely to have broader implications, for example robust versions of PCA when not all errors are created equal but have a corresponding prior.

\section{Algorithm} \label{sec:algo}

We now describe our algorithm. As mentioned, it is a convex relaxation of the Maximum Likelihood (ML) estimator as applied to the standard stochastic blockmodel. So, in what follows, we first develop notation and the exact ML estimator, and then its relaxation.

{\bf ML for the standard stochastic blockmodel:} Recall that in the standard stochastic blockmodel nodes are divided into disjoint clusters, and edges in the graph are chosen independently; the probability of an edge between a pair of nodes in the same cluster is $p$, and for a pair of nodes in different clusters it is $q$. Given the graph, the task is to find the underlying clusters that generated it. To write down the ML estimator for this, let us represent any candidate clustering by a corresponding \emph{cluster matrix} $ Y\in \mathbb{R}^{n\times n} $  where $y_{ij} =1 $ if and only if nodes $i $ and~$ j $ are assigned to the same cluster,\footnote{Throughout this paper, for any matrix $ M $, $ m_{ij} $ denotes its $ (i,j) $-th entry.} and $ y_{ij}=0 $ otherwise; in particular, $ y_{ii} = 1 $ for any node $ i $ that belongs to a cluster. Any $Y$ thus needs to have a block-diagonal structure, with each block being all $ 1 $'s. 

A vanilla ML estimator then involves optimizing a likelihood subject to the combinatorial constraint that the search space is the cluster matrices. Let $ A\in \mathbb{R}^{n\times n} $ be the observed adjacency matrix of the graph (we assume $ a_{ii} = 1 $ for all $ i $); then, the log likelihood function of $ A $ given $ Y $ is
\begin{align*}
 &\log \mathbb{P}(A|Y) \\
=& \log  \left(\quad\;\prod_{\mathclap{(i,j):y_{ij}=1}} p^{a_{ij}}(1-p)^{1-a_{ij}} \quad\prod_{\mathclap{(i,j):y_{ij}=0}} q^{a_{ij}}(1-q)^{1-a_{ij}}\right).  
\end{align*}
We notice that this can be written, via a re-arrangement of terms, as 
\begin{align}
\log \mathbb{P}(A|Y) &= \log\left(\frac{p}{q}\right) \sum_{\mathclap{a_{ij}=1}} y_{ij} - \log\left(\frac{1-q}{1-p}\right) \sum_{\mathclap{a_{ij}=0}} y_{ij}  +C, \label{eq:ml_obj}
\end{align}
where $ C $ collects the terms that are independent of $ Y $. The ML estimator would be maximizing the above expression subject to $ Y $ being a cluster matrix. While the objective is a linear function of $Y$, this optimization problem is combinatorial due to the requirement that $Y$ be a cluster matrix (i.e., block-diagonal with each block being all-ones), and is intractable in general.

{\bf  Our algorithm:} We obtain a convex and tractable algorithm by  replacing the constraint ``$Y$ is a cluster matrix" with (i) the constraints $0\leq y_{ij}\leq 1$ for all pairs $(i,j)$, and (ii) a nuclear norm\footnote{The nuclear norm of a matrix is the sum of its singular values.} regularizer $\|Y\|_*$ in the objective. The latter encourages $Y$ to be {\em low-rank}, and is based on the well-established insight that a cluster matrix  has low rank---in particular, its rank equals the number of clusters. (We discuss other related relaxations later in this section.)

Also notice that the likelihood expression~\eqref{eq:ml_obj} is linear in $ Y $ and only the \emph{ratio} of the two coefficients $ \log(p/q) $ and $ \log((1-q)/(1-p)) $ is important. We therefore introduce a parameter $ t $ which allows us to choose any ratio. This has the advantage that instead of knowing both $ p $ and $ q $, we only need to choose one number $ t $ (which should be between $ p $ and $ q $; we remark on how to choose $ t $ later). This leads to the following convex formulation: 
\begin{align}
\max_{Y\in\mathbb{R}^{n\times n}} & \quad \ca\sum_{ a_{ij}=1}y_{ij} \; -  \cac\sum_{a_{ij}=0}y_{ij} \; - 48\sqrt{n}\left\Vert Y\right\Vert _{*} \label{eq:cvx_prog}\\
\mbox{s.t.} & \quad 0\le y_{ij}\le1,\forall i,j.\label{eq:inequalities}
\end{align}
where the weights $ c_{\mca} $ and $ c_{\mca^c} $ are given by
\begin{equation}
c_{\mca}=\sqrt{\frac{1-t}{t}} \quad \mbox{ and } \quad c_{\mca^{c}}=\sqrt{\frac{t}{1-t}}.
\label{eq:weights}
\end{equation}  
Here the factor ${48\sqrt{n}} $ balances the contributions of the nuclear norm and the likelihood; the specific values of this factor as well as of $ \ca $ and $ \cac $ are derived from our analysis (cf. Section~\ref{sec:proof_WisGood}). The optimization problem~\eqref{eq:cvx_prog}--\eqref{eq:inequalities} is convex and can be cast as a Semidefinite Program (SDP)~\cite{chandrasekaran2011siam,recht2010guaranteed}. More importantly, it can be solved using efficient first-order methods for large graphs (see  Section~\ref{sec:implementation}).

Our algorithm is given as Algorithm~\ref{alg:clustering}. Depending on the given $A$ and the choice of $ t $, the optimal solution $\widehat{Y}$ may or may not be a cluster matrix.  Checking if a given $ \widehat{Y} $ is a cluster matrix can be done easily, e.g., via an SVD, which will also reveal the cluster memberships if it is a cluster matrix. If it is not, any one of several rounding/aggregation ideas (e.g., the one in~\cite{mathieu2010correlation}) can be used empirically; we do not delve into this approach in this paper, and simply output failure. In Section~\ref{sec:perf} we provide sufficient conditions under which $\widehat{Y} $ is guaranteed to be a cluster matrix, with \emph{no} rounding required. 
\begin{algorithm}
\caption{Convex Clustering}
\label{alg:clustering} 
\begin{algorithmic}
	\STATE Input: $ A \in \mathbb{R}^{n \times n}$, $ t \in (0,1) $
	\STATE Solve program~\eqref{eq:cvx_prog}--\eqref{eq:inequalities} with weights~\eqref{eq:weights}. Let $ \widehat{Y} $ be an optimal solution.
	\IF{$ \widehat{Y} $ is a cluster matrix}
		\STATE Output cluster memberships obtained from $ \widehat{Y} $.
	\ELSE
		\STATE Output ``Failure''.
	\ENDIF
\end{algorithmic}
\end{algorithm}

\subsection{Remarks about the Algorithm}\label{sec:remark_alg}

Note that while we derive our algorithm from the standard stochastic blockmodel, our analytical results hold in a much more general setting. In practice, one could execute the algorithm (with appropriate choice of $t$, and hence $ \ca $ and $ \cac $) on any given graph.

\subsubsection*{Tighter relaxations} The formulation~\eqref{eq:cvx_prog}--\eqref{eq:inequalities} is not the only way to relax the non-convex ML estimator. Instead of the nuclear norm regularizer,  a hard constraint $ \Vert Y \Vert_* \le n $ may be used. One may further replace this constraint with the positive semidefinite constraint $ Y\succeq 0$ and the linear constraints $ y_{ii} = 1 $, both satisfied by any cluster matrix.\footnote{The constraints $ y_{ii} =1,\forall i$ are satisfied when there is no unaffiliated node.} It is not hard to check that these modifications lead to convex relaxations with smaller feasible sets, so any performance guarantee for our formulation~\eqref{eq:cvx_prog}--\eqref{eq:inequalities} also applies to these alternative formulations. We choose to focus on our original formulation based on the following theoretical and practical considerations: a) Its performance guarantees apply to the other tighter relaxations as well. 
b) We do not obtain order-wise better theoretical guarantees with these alternative formulations. The work~\cite{mathieu2010correlation} considers these tighter relaxations but does not obtain better exact recovery guarantees than ours. In fact, as we argue in the next section, our guarantees are likely to be order-wise optimal and thus any alternative convex formulations are unlikely to provide significant improvements in a scaling sense. c) Our simpler formulation facilitates efficient solution for large graphs via first-order methods; we describe one such method in Section~\ref{sec:implementation}. 

\subsubsection*{Choice of $ t $} Our algorithm requires an extraneous input~$ t $. For the standard planted $ k$-disjoint-cliques problem~\cite{ames2010kclique,mcsherry2001spectralpartitioning} (with $ k $ disjoint cliques planted in a random graph $G_{n,1/2} $), one can use $ t=3/4 $ (see Section~\ref{sec:clique}). For the standard stochastic blockmodel (with nodes partitioned into equal-size clusters and edge probabilities being uniformly $ p $ and $ q $ inside and across clusters), the value of $ t $ can be determined from the data (see Section~\ref{sec:est_p}). In these cases, our algorithm has no tuning parameters whatsoever and does not require knowledge of the number or sizes of the clusters. For the general setting, $ t $ should be chosen to lie between $ p $ and $ q $, which now represent the lower/upper bounds for the in/cross-cluster edge densities. As such, $ t $ can be interpreted as the \emph{resolution} of the clustering algorithm. To see this, suppose the clusters have a hierarchical structure, where each big cluster is partitioned into smaller sub-clusters with higher edge densities inside. In this case, either level of the clusters, the  top-level big ones or the bottom-level small ones, can be considered as the ground truth, and it is a priori not clear
which of them should be recovered. This ambiguity is resolved by specifying $ t $: our algorithm recovers those clusters with in-cluster edge density higher than $ t $ and cross-cluster density lower than $ t $. With a larger $ t $, the algorithm operates at a higher resolution and detects small clusters with high density. By varying $ t $, our algorithm can be turned into a method for \emph{multi-resolution clustering}~\cite{fortunato2010community} which explores all levels of the cluster hierarchy. We leave this to future work. Importantly, the above example shows that it is generally impossible to uniquely determine the value of $ t $ from data.

\section{The Generalized Stochastic Blockmodel}\label{sec:model}

While our algorithm above is derived as a relaxation of ML estimator for the standard stochastic blockmodel, we establish performance guarantees in a much more general setting. The model is described below, which is defined by six parameters $ n $, $ n_1 $, $ r $, $ K $, $ p $ and $ q $. 

\begin{defn}[Generalized Stochastic Blockmodel (\gpp)]\label{def:GSBM}
The $ n=n_1 + n_2 $ nodes are divided into two sets $ V_1 $ and $ V_2 $.  The $n_1$  nodes in $ V_1 $ are further partitioned into $r$ disjoint sets, which we will refer to as the ``true" clusters. Let $K$ be the {\em minimum size} of a true cluster. If $ p>q $, consider a random graph generated as follows: For every pair of nodes $i,j$ that belong to the same true cluster, edge $(i,j)$ is present in the graph with probability that is {\em at least} $p$, while for every pair where the nodes are in different clusters the edge is present with probability {\em at most} $q$. The other $ n_2 $ nodes in $ V_2 $ are not in any cluster (we will call them unaffiliated nodes); for each $ i\in V_2 $ and $ j\in V_1 \cup V_2 $, there is an edge between the pair $ i,j $ with probability {\em at most} $ q $. If $ p<q $, then the graph is generated similarly as above, except that the probability of an in-cluster edge is \emph{at most} $ p $, while the probability of other edges is \emph{at least} $ q $. Note that it is implicit that $ r\ge 1 $, $ K\ge 1 $ and $ n\ge n_1 \ge rK $. 
\end{defn}

\begin{defn}[Semi-random \gpp]\label{def:SR_GSBM}
On a graph generated from \gpp with $ p>q $ ($ p<q $, resp.), an adversary is allowed to arbitrarily (a) add (remove, resp.) edges between pairs of nodes in the same true cluster, and (b) remove (add, resp.) edges between pairs of nodes if they are in different clusters, or if at least one of them is an unaffiliated node in $ V_2 $.
\end{defn}

The {\bf objective} is to find the underlying true clusters, given the graph generated from the semi-random \gpp. 

The standard stochastic blockmodel/planted partition model is a special case of \gpp with $ n_2=0, r\ge 2 $, all cluster sizes equal to $ K $, and all in-cluster and cross-cluster probabilities equal to $ p $ and $ q $, respectively. \gpp generalizes the standard models as it allows for heterogeneity in the graph:
\begin{itemize}
\item $ p$ and $q$ are lower and upper bounds instead of exact edge probabilities, so nodes can have different degrees; there may also exist nested clusters (cf. Section~\ref{sec:remark_alg}).
\item $ K $ is also a lower bound, so clusters can have different sizes.
\item Unaffiliated nodes (nodes not in any cluster) are allowed.
\end{itemize}
\gpp removes many restrictions in the standard planted models and better models practical graphs.

The semi-random \gpp allows for further modeling power. It blends the worst case models, which are often overly pessimistic,\footnote{For example, the minimum graph bisection problem is NP-hard.} and the purely random graphs, which are extremely unstructured and have very special properties usually not possessed by real-world graphs~\cite{frieze1997algoRandomGraphs}. This semi-random framework has been used and studied extensively in the computer science literature as a better model for real-world networks~\cite{bollobas2004maxcut,feige2001semirandom,krivelevich2006coloring}, as it allows for \emph{non-randomness} in a graph. Note that the term ``adversary'' means \emph{arbitrary} deviation from the random model (as long as it is allowed by the semi-random model), and it covers, but is not limited to, adversarial deviation. At first glance, the adversary seems to make the problem easier as it adds in-cluster edges and removes cross-cluster edges (when $ p>q $). This is not necessarily  the case. The adversary can significantly change some statistical properties of the random graph (e.g., alter spectral structure and node degrees, and create local optima by adding dense spots~\cite{feige2001semirandom}), and foil algorithms that over-exploit such properties. For example, some spectral algorithms that work well on random models are proved to fail in the semi-random setting~\cite{coja2004coloringSemirandom}. An algorithm that works well in the semi-random setting is likely to be more robust to model mis-specification in real-world applications~\cite{feige2001semirandom}. As shown later, our algorithm processes this desired property. 

\subsection{Special Cases}

\gpp recovers as special cases many classical and widely studied models for clustered graphs, by considering different values for the parameters $ n_1 $, $ n_2 $, $ r $, $ K $, $ p $ and $ q $. We classify these models into two categories based on the relation between $ p $ and $ q $.

\begin{enumerate}
\item \textbf{$ p>q $:} \gpp with $ p>q $ models \emph{homophily}, the tendency that individuals belonging to the same community tend to connect \emph{more} than those in different communities. Special cases include:
\begin{itemize}
    \item \textbf{Planted Clique}~\cite{alon1998hiddenClique}: $ p=1 $, $ r=1 $ (so $ n_1=K $) and $ n_2>0 $; 
    \item \textbf{Planted $ r $-Disjoint-Cliques}~\cite{mcsherry2001spectralpartitioning,ames2010kclique}: $ p=1 $ and $ r\ge 1 $;
    \item \textbf{Planted Dense Subgraph}~\cite{mcsherry2001spectralpartitioning}: $ p< 1 $, $r=1 $ and $ n_2>0 $;
    \item \textbf{Stochastic Blockmodel, Planted Partition}~\cite{holland83,condon2001algorithms}: $ n_2 =0 $, $ r\ge 2 $ with all cluster sizes equal to $ K $. The special case with $ r=2 $ can be call the {Planted Bisection Model}~\cite{condon2001algorithms,bollobas2004maxcut}.
\end{itemize}
\item \textbf{$ p<q $:} 
This is complementary to the homophily case above.
Special cases include:
\begin{itemize}
    \item \textbf{Planted Coloring}~\cite{feige2001semirandom}: $ q>p=0 $, $ r\ge 2 $, and $ n_2 = 0 $;
    \item \textbf{Planted $ r $-Cut, Planted Noisy Coloring}~\cite{bollobas2004maxcut,decelle2011asymptotic}: $ q>p\ge0 $, $ r\ge 2 $, and $ n_2 = 0 $.
\end{itemize} 
\end{enumerate}
Recall that the max-clique, max-cut, graph partition and graph coloring problems are all NP-hard in the worst case~\cite{garey,alon1998hiddenClique,alon1997coloring3,condon2001algorithms}. The above ``planted'' variants of these problems are standard models for studying their average-case behavior.

In the next section, we provide performance guarantees for our algorithm under the semi-random \gpp. This implies guarantees for all the special cases above. We provide a detailed comparison with literature after presenting our results.

\section{Main Results: Performance Guarantees} \label{sec:perf}

In this section we study the performance of our algorithm under the semi-random \gpp and provide theoretical guarantees. We give a unified theorem, and then discuss its consequences for various special cases, and compare with literature. We also discuss how to estimate the parameter $ t $ in the special case of the standard stochastic blockmodel. We shall first consider the case with $ p>q $. The $ p<q $ case is similar and is discussed in Section~\ref{sec:heterophily}. All proofs are postponed to Sections~\ref{sec:proof_mono} to~\ref{sec:proof_est_pq}.

\subsection{A Monotone Lemma}
Our optimization-based algorithm has a nice monotone property: adding/removing edges ``aligned with" the optimal~$\widehat{Y}$ (as is done by the adversary under the semi-random setting) cannot result in a different optimal solution. This is summarized in the following lemma.
\begin{lem}\label{lem:monotone}
Suppose $ p>q $ and $\widehat{Y}$ is the unique optimal solution of~(\ref{eq:cvx_prog})--(\ref{eq:inequalities}) for a given $ A $ and $ t $. If now we arbitrarily change some edges of  $A$ to obtain $\widetilde{A}$, by (a) choosing some edges such that $\widehat{y}_{ij} = 1$ but $a_{ij} = 0$, and making $\widetilde{a}_{ij} = 1$, and (b) choosing some edges where $\widehat{y}_{ij} = 0$ but $a_{ij} = 1$, and making $\widetilde{a}_{ij} = 0.$ Then, $\widehat{Y}$ is also the unique optimal solution of~(\ref{eq:cvx_prog})--(\ref{eq:inequalities}) with~$\widetilde{A}$ as the input and the same $ t $.
\end{lem}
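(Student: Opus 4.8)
The plan is to work directly with the objective of~\eqref{eq:cvx_prog} and exploit that it is linear in the data $A$, together with the box constraints~\eqref{eq:inequalities}. Write $f_A(Y)$ for the objective as a function of both the input matrix $A$ and the variable $Y$. The nuclear-norm term $48\sqrt n\,\|Y\|_*$ does not depend on $A$, and the feasible set~\eqref{eq:inequalities} does not depend on $A$ either; so in passing from $A$ to $\widetilde A$ only the two linear sums change, and they change only on the edges the adversary touches. In particular $\widehat Y$ remains feasible for $\widetilde A$, so it suffices to compare objective values.

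First I would compute the objective difference $f_{\widetilde A}(Y)-f_A(Y)$ edge by edge. On an edge of type (a), where $a_{ij}=0$ is flipped to $\widetilde a_{ij}=1$, the contribution of $y_{ij}$ switches from $-\cac\,y_{ij}$ to $+\ca\,y_{ij}$, a net change of $(\ca+\cac)\,y_{ij}$; on an edge of type (b), where $a_{ij}=1$ is flipped to $\widetilde a_{ij}=0$, the contribution switches from $+\ca\,y_{ij}$ to $-\cac\,y_{ij}$, a net change of $-(\ca+\cac)\,y_{ij}$. Summing over the touched edges gives
\[
f_{\widetilde A}(Y)-f_A(Y)=(\ca+\cac)\Big(\sum_{(i,j)\in(a)}y_{ij}-\sum_{(i,j)\in(b)}y_{ij}\Big),
\]
and note $\ca+\cac>0$ since $t\in(0,1)$ by~\eqref{eq:weights}.

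The key step is to observe that the right-hand side is maximized over the feasible box at $Y=\widehat Y$. This is exactly what the alignment conditions defining the two edge sets give us: on set (a) we have $\widehat y_{ij}=1$, and on set (b) we have $\widehat y_{ij}=0$. Hence by $0\le y_{ij}\le 1$ every feasible $Y$ satisfies $\sum_{(a)}y_{ij}\le\sum_{(a)}\widehat y_{ij}$ and $\sum_{(b)}y_{ij}\ge\sum_{(b)}\widehat y_{ij}$, so
\[
f_{\widetilde A}(Y)-f_A(Y)\;\le\;f_{\widetilde A}(\widehat Y)-f_A(\widehat Y)\qquad\text{for every feasible }Y,
\]
which rearranges to $f_{\widetilde A}(Y)-f_{\widetilde A}(\widehat Y)\le f_A(Y)-f_A(\widehat Y)$.

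Finally I would invoke uniqueness for $A$: for every feasible $Y\neq\widehat Y$ the right-hand side is strictly negative, hence so is the left-hand side, which says precisely that $\widehat Y$ is the unique maximizer of $f_{\widetilde A}$. I do not expect a genuine obstacle here; the argument is a one-sided rearrangement inequality, and the only things to get right are the sign bookkeeping in the edge-by-edge accounting and the directions of the two box inequalities (upper bound on set (a), lower bound on set (b)), which are forced by the alignment hypotheses $\widehat y_{ij}=1$ and $\widehat y_{ij}=0$ respectively.
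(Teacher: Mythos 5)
Your proof is correct and follows essentially the same route as the paper's: both exploit linearity of the objective in $A$, compute the per-edge change $(\ca+\cac)\bigl(\sum_{(a)}y_{ij}-\sum_{(b)}y_{ij}\bigr)$, and use the box constraints together with $\widehat y_{ij}=1$ on set (a) and $\widehat y_{ij}=0$ on set (b) to show the perturbation favors $\widehat Y$ at least as much as any competitor, so strict optimality under $A$ transfers to $\widetilde A$. The only difference is presentational — you phrase it as ``$f_{\widetilde A}-f_A$ is maximized at $\widehat Y$'' while the paper sums three displayed inequalities — and the sign bookkeeping in both is identical.
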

The lemma shows that our algorithm is inherently robust under the semi-random model. In particular, the algorithm succeeds in recovering the true clusters on the semi-random \gpp as long as it succeeds on the \gpp with the same parameters. In the sequel, we therefore focus solely on the \gpp, with the understanding that any guarantee for it immediately implies a guarantee for the semi-random variant.

\subsection{Main Theorem}
Let $Y^*$ be the matrix corresponding to the true clusters in the \gpp, i.e., $y_{ij}^* = 1$ if and only if $ i,j \in V_1 $ and they are in the same true cluster, and 0 otherwise. The theorem below establishes conditions under which our algorithm, specifically the convex program (\ref{eq:cvx_prog})--(\ref{eq:inequalities}), yields this $Y^*$ as the unique optimal solution with high probability (without any further need for rounding etc.).

\begin{thm}
\label{thm:main} Suppose the graph $ A $ is generated according to the \gpp with $ p>q $. If $ t $ in~\eqref{eq:weights} is chosen to satisfy 
\begin{equation}
\frac{1}{4}p +\frac{3}{4}q \le t\le \frac{3}{4}p + \frac{1}{4}q,\label{eq:t_cond}
\end{equation}  
then $Y^{*}$ is the unique optimal solution to the convex program~\eqref{eq:cvx_prog}--\eqref{eq:inequalities} with probability at least $1-4 n^{-8}$ provided 
\begin{align}
\frac{p-q}{\sqrt{p(1-q)}}  \ge  c_{1} \max\left\{ \frac{\sqrt{n}}{K}, \frac{\log^2 n}{\sqrt{K}} \right\},\label{eq:main_cond}
\end{align}
where $c_{1}$ is an absolute constant independent of $ p,q,K,r $ and~$ n $.
\end{thm}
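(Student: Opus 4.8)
The plan is to prove optimality of $Y^*$ by the standard primal-perturbation (dual-certificate) route: show that no feasible direction away from $Y^*$ can increase the objective, and that the increase is strictly negative unless the direction is zero. Write the linear part of the objective as $\langle W,Y\rangle$, where $W_{ij}=\ca$ on observed edges ($a_{ij}=1$) and $W_{ij}=-\cac$ on non-edges ($a_{ij}=0$), so the program reads $\max_{0\le Y\le 1}\ \langle W,Y\rangle-48\sqrt{n}\|Y\|_*$. By the Monotone Lemma it suffices to treat \gpp itself, and in fact the extreme case in which in-cluster edges are i.i.d.\ Bernoulli$(p)$ and all other edges i.i.d.\ Bernoulli$(q)$, since the adversary's freedom lets us push every probability to these boundary values. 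Let $U$ be the column space of $Y^*$ (spanned by normalized cluster-indicator vectors), let $T$ be the associated tangent space, and let $U_0=\sum_a \frac{1}{K_a}\mathbf{1}_a\mathbf{1}_a^\top\in\partial\|Y^*\|_*$ be the sign matrix; note $U_0$ is supported exactly on the in-cluster block $\Omega=\mathrm{supp}(Y^*)$, with entries $1/K_a\le 1/K$.

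For any feasible $\Delta=Y-Y^*$ (so $\Delta_{ij}\le 0$ on $\Omega$ and $\Delta_{ij}\ge 0$ off $\Omega$), the subgradient inequality $\|Y^*+\Delta\|_*\ge\|Y^*\|_*+\langle U_0,\Delta\rangle+\|P_{T^\perp}\Delta\|_*$ gives
$$\mathrm{obj}(Y)-\mathrm{obj}(Y^*)\ \le\ \langle \mathbb{E}W-48\sqrt{n}\,U_0,\Delta\rangle+\langle W-\mathbb{E}W,\Delta\rangle-48\sqrt{n}\,\|P_{T^\perp}\Delta\|_*.$$
I would first show the signal term $\langle \mathbb{E}W-48\sqrt{n}\,U_0,\Delta\rangle\le 0$ entrywise: on $\Omega$, $(\mathbb{E}W)_{ij}=\mu_{\mathrm{in}}:=(p-t)/\sqrt{t(1-t)}>0$ multiplies $\Delta_{ij}\le 0$, and $(\mathbb{E}W-48\sqrt{n}U_0)_{ij}=\mu_{\mathrm{in}}-48\sqrt{n}/K_a\ge 0$ exactly when $\mu_{\mathrm{in}}\ge 48\sqrt{n}/K$; off $\Omega$, $(\mathbb{E}W)_{ij}=\mu_{\mathrm{out}}:=(q-t)/\sqrt{t(1-t)}<0$ multiplies $\Delta_{ij}\ge 0$. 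Because the admissible range \eqref{eq:t_cond} of $t$ forces $\mu_{\mathrm{in}}\ge\tfrac14 (p-q)/\sqrt{p(1-q)}$, the requirement $\mu_{\mathrm{in}}\ge 48\sqrt{n}/K$ is precisely the $\sqrt{n}/K$ branch of \eqref{eq:main_cond}.

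It remains to dominate the deviation term. Splitting through the projections, $\langle W-\mathbb{E}W,\Delta\rangle=\langle P_{T^\perp}(W-\mathbb{E}W),P_{T^\perp}\Delta\rangle+\langle P_T(W-\mathbb{E}W),\Delta\rangle$. The first piece is at most $\|W-\mathbb{E}W\|\cdot\|P_{T^\perp}\Delta\|_*$, so I must establish the operator-norm bound $\|W-\mathbb{E}W\|\le C\sqrt{n}$ with $C<48$ w.h.p.; the weights \ca,\ \cac\ in \eqref{eq:weights} are chosen exactly so that each entry of $W-\mathbb{E}W$ has variance $\Theta(1)$, which is what lets $48\sqrt{n}$ dominate, and this piece is then absorbed by $-48\sqrt{n}\|P_{T^\perp}\Delta\|_*$. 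The second, tangent, piece I would control using $|\Delta_{ij}|\le 1$ together with Bernstein-type concentration of the within-cluster partial sums of $W-\mathbb{E}W$ (each cluster being a sum of up to $K$ independent centered terms, normalized by $1/K_a$); this is where the sparse regime costs logarithmic factors and yields the second branch $\log^2 n/\sqrt{K}$ of \eqref{eq:main_cond}, the residual slack in the signal term covering it.

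Finally, uniqueness follows by making each inequality strict: whenever $P_{T^\perp}\Delta\neq 0$ the gap $48\sqrt{n}-\|W-\mathbb{E}W\|>0$ produces a strictly negative bound, and whenever $P_{T^\perp}\Delta=0$ (i.e.\ $\Delta\in T$) the nonnegativity constraints on the outlier and cross-cluster blocks force $\Delta$ to vanish there, after which the strict positivity of $\mu_{\mathrm{in}}$ on $\Omega$ forces $\Delta=0$. I expect the genuine obstacle to be the operator-norm concentration $\|W-\mathbb{E}W\|\le C\sqrt{n}$ in the sparse regime: for $p$ as small as $\Theta(n/K^2)$ the naive matrix-Bernstein or $\varepsilon$-net bounds are contaminated by high-degree vertices, so obtaining the clean $\Theta(\sqrt{n})$ bound (and its tangent-space companion) is the technical heart of the argument, and is where the $\log^2 n$ factors enter.
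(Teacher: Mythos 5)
Your outline is, at bottom, the paper's own proof rewritten in primal form. The paper reduces to homogeneous edge probabilities via the monotone lemma exactly as you do, and then constructs an explicit approximate dual certificate $W=W_{1}+W_{2}+W_{3}+W_{4}$ verifying Proposition~\ref{lem:opt_condition}; your decomposition of the objective gap into the deterministic signal $\langle \mathbb{E}W-48\sqrt{n}\,U_0U_0^\top,\Delta\rangle$, a tangent noise term $\langle P_T(W-\mathbb{E}W),\Delta\rangle$ controlled in $\ell_\infty$ by Bernstein on within-cluster row sums, and a cotangent noise term $\|W-\mathbb{E}W\|\cdot\|P_{T^\perp}\Delta\|_*$ absorbed by the regularizer is precisely the content of that proposition, with the paper's $W_1,W_2,W_3$ being (blockwise) scalar multiples of your centered matrix $W-\mathbb{E}W$ on $R$ and $R^c$. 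The same two concentration inputs (Lemmas~\ref{lem:random_matrix} and~\ref{lem:subgaussian_concentr}) appear in the same places, and your identification of which branch of~\eqref{eq:main_cond} each one produces is accurate; the ``technical heart'' you worry about is handled in the paper by the hypothesis $\sigma\ge c_1 B\log^2 n/\sqrt{n}$ of Lemma~\ref{lem:random_matrix}, which is exactly what the $\log^2 n/\sqrt{K}$ branch buys.

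One concrete step fails as written. Your claim that off the support one has $(\mathbb{E}W)_{ij}=\mu_{\mathrm{out}}<0$ is false on the diagonal entries of outlier nodes: the paper's convention is $a_{ii}=1$ deterministically while $y^*_{ii}=0$ for $i\in V_2$, so there $(\mathbb{E}W)_{ii}=\ca>0$ multiplies $\Delta_{ii}\ge 0$, the entrywise sign argument goes the wrong way, and there is no randomness to exploit. This is exactly why the paper's certificate carries the extra diagonal piece $W_4=(1+\epsilon)\lambda\ca P_{R^c}(I)$. The repair in your framework is one line: the offending contribution is $\ca\sum_{i\in V_2}\Delta_{ii}=\langle \ca P_{R^c}(I),P_{T^\perp}\Delta\rangle\le \ca\,\|P_{T^\perp}\Delta\|_*$ (the outlier diagonal lies in $T^\perp$), and since condition~\eqref{eq:main_cond} forces $t\gtrsim p\gtrsim n/K^2\gtrsim 1/n$ one gets $\ca\le t^{-1/2}=O(\sqrt{n})$ with a constant small enough to be swallowed by the remaining margin in $-48\sqrt{n}\,\|P_{T^\perp}\Delta\|_*$. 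With that addition your argument goes through and matches the paper's step for step.
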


Our theorem quantifies the tradeoff between the four parameters governing the hardness of \gpp---the minimum in-cluster edge density $ p $, the maximum across-cluster density $ q $, the minimum cluster size $K$  and the number of unaffiliated nodes $ n_2=n-n_1 $---required for our algorithm to succeed, i.e., to recover the underlying true clustering {without any error}. Note that we can handle any values of $ p,q,n_2 $ and $ K $ as long as they satisfy the condition in the theorem; in particular, they are allowed to scale with $n$. Interestingly, the theorem does not have an explicit dependence on the number of clusters $ r $ (except for the requirement $ rK\le n $). We note that by using a slightly stronger version of the spectral bound in Lemma~\ref{lem:random_matrix} in the appendix (see e.g., \cite{feige2005spectral}), it is possible to improve the $ \log^2 n $ factor in~\eqref{eq:main_cond} to $ \sqrt{\log n} $. We omit such logarithmic improvement for reasons of space.

We now discuss the \emph{tightness} of Theorem~\ref{thm:main} in terms of these model parameters. When the  minimum cluster size $ K=\Theta(n) $, we have a near-matching converse result. 
\begin{thm}
\label{thm:converse}
Suppose  all clusters have equal size $ K $, and the in-cluster (cross-cluster, resp.) edge probabilities are uniformly $ p $ ($ q $, resp.), with $ K=\Theta(n) $ and $ n_2 = \Theta(n_1) $. Under \gpp with $ p>q $ and $ n $ sufficiently large, for any algorithm to correctly recover the clusters with probability at least $ \frac{3}{4} $, we must have 
\begin{align*}
  \frac{p-q}{\sqrt{p(1-q)}} \ge c_2 \frac{1}{\sqrt{n}},
\end{align*} 
where $ c_2 $ is an absolute constant.
\end{thm}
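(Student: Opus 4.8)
The plan is to prove the contrapositive by an information-theoretic converse: I will exhibit two distinct ground-truth clusterings whose induced graph distributions are statistically indistinguishable whenever $\frac{p-q}{\sqrt{p(1-q)}} < c_2/\sqrt{n}$, so that no algorithm can separate them with the required confidence. Concretely, fix a reference partition of the $n_1$ inliers into $r\ge 2$ equal clusters $\mathcal{C}_1,\dots,\mathcal{C}_r$ of size $K=\Theta(n)$ (with the $n_2=\Theta(n_1)$ outliers held fixed), pick $u\in\mathcal{C}_1$ and $w\in\mathcal{C}_2$, and let $H_0$ be this partition while $H_1$ swaps $u$ and $w$, i.e. $\mathcal{C}_1'=(\mathcal{C}_1\setminus\{u\})\cup\{w\}$ and $\mathcal{C}_2'=(\mathcal{C}_2\setminus\{w\})\cup\{u\}$. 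Both are valid \gpp instances with identical parameters (in particular the swap preserves all cluster sizes at $K$), yet the resulting partitions genuinely differ, so any algorithm that recovers the true clustering induces a test between $H_0$ and $H_1$. Under the uniform prior on $\{H_0,H_1\}$, Le Cam's two-point bound says the average testing error is at least $\tfrac12\bigl(1-\mathrm{TV}(\mathbb{P}_0,\mathbb{P}_1)\bigr)$; hence if I force $\mathrm{TV}(\mathbb{P}_0,\mathbb{P}_1)<\tfrac12$, some hypothesis is misclassified with probability $>\tfrac14$, contradicting a uniform success probability of $\tfrac34$.

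The distributions $\mathbb{P}_0,\mathbb{P}_1$ are products over edges and differ only on edges incident to $u$ or $w$ whose co-membership flips under the swap: the $K-1$ edges from $u$ to $\mathcal{C}_1\setminus\{u\}$ (law $p\to q$), the $K-1$ edges from $u$ to $\mathcal{C}_2\setminus\{w\}$ (law $q\to p$), and the two symmetric families for $w$; the edge $(u,w)$ and all edges to the remaining clusters/outliers are unchanged. Thus exactly $m=4(K-1)=\Theta(n)$ edges change law. The key step is to measure the distance with the squared Hellinger divergence rather than KL, because Hellinger tensorizes multiplicatively and, crucially, for a single pair of Bernoullis satisfies $H^2(\mathrm{Ber}(p),\mathrm{Ber}(q))\asymp \frac{(p-q)^2}{p(1-q)}$, reproducing exactly the quantity in the theorem. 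Tensorizing the affinity, $H^2(\mathbb{P}_0,\mathbb{P}_1)=1-\prod_e\bigl(1-H^2_e\bigr)\le m\cdot\max_e H^2_e\lesssim n\cdot\frac{(p-q)^2}{p(1-q)}$. Under the assumption $\frac{(p-q)^2}{p(1-q)}<c_2^2/n$ this is at most a constant proportional to $c_2^2$, so for $c_2$ a small enough absolute constant $H^2\le\tfrac18$; then $\mathrm{TV}\le H\sqrt{2-H^2}\le\tfrac12$, closing the Le Cam argument.

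The main obstacle, and the reason the stated denominator is $p(1-q)$, is getting the divergence right. A naive KL computation gives per-edge cost $\asymp (p-q)^2/\bigl(q(1-p)\bigr)$ (symmetrized), whose denominator can be smaller than $p(1-q)$ by an unbounded factor when $q$ is small, so a KL-based two-point bound would be loose and would fail to certify indistinguishability in the sparse/imbalanced regime. Using Hellinger removes this mismatch, but then the work shifts to establishing the two-sided per-edge bound $H^2(\mathrm{Ber}(p),\mathrm{Ber}(q))\asymp (p-q)^2/\bigl(p(1-q)\bigr)$ \emph{uniformly} over all admissible $(p,q)$ with $p>q$, including $q\downarrow 0$ and $p\uparrow 1$. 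I would do this by writing $H^2=\tfrac12\bigl[(\sqrt{p}-\sqrt{q})^2+(\sqrt{1-p}-\sqrt{1-q})^2\bigr]$, rationalizing each square as $(p-q)^2/(\sqrt{p}+\sqrt{q})^2$ and $(p-q)^2/(\sqrt{1-p}+\sqrt{1-q})^2$, and using $p>q$ to sandwich $(\sqrt{p}+\sqrt{q})^2\asymp p$ and $(\sqrt{1-p}+\sqrt{1-q})^2\asymp 1-q$. The remaining ingredients (validity of the construction, the count of $\Theta(n)$ flipped edges, and the standard Hellinger/TV/Le Cam inequalities) are routine.
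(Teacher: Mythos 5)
Your proposal is correct, and it reaches the stated bound by a genuinely different route from the paper. The paper runs a global Fano argument: it takes $\mathcal{F}$ to be all equal-size partitions (so $\log|\mathcal{F}|=\Theta(n)$ when $K=\Theta(n)$), bounds $I(A;Y^*)\le\binom{n}{2}I(a_{12};Y^*)$ by symmetry and conditional independence, and evaluates the per-edge mutual information as a KL divergence to the \emph{mixture} marginal $\mathbb{P}(a_{12})$, which after a $\log x\le x-1$ bound yields $\frac{\alpha(1-\alpha)(p-q)^2}{\gamma(1-\gamma)}\le c\,\frac{(p-q)^2}{p(1-q)}$ --- note that the paper gets the favorable denominator $p(1-q)$ not by Hellinger but because the KL is measured against the mixture, whose success probability $\gamma$ satisfies $\gamma\ge\alpha p$ and $1-\gamma\ge(1-\alpha)(1-q)$. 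Your two-point Le Cam construction (swapping two nodes between clusters, $\Theta(K)=\Theta(n)$ flipped edge laws, Hellinger tensorization with the uniform two-sided estimate $H^2(\mathrm{Ber}(p),\mathrm{Ber}(q))\asymp(p-q)^2/(p(1-q))$ for $p>q$, which is correct since $1\le 1-q+p\le 2$) is more local and arguably more transparent; it also shows the bottleneck is already present for distinguishing two partitions differing in a single swap, whereas Fano needs the full exponential packing. The two arguments give the same rate here precisely because $K=\Theta(n)$: Fano needs $n^2\cdot\frac{(p-q)^2}{p(1-q)}\gtrsim\log|\mathcal{F}|=\Theta(n)$, while Le Cam needs $K\cdot\frac{(p-q)^2}{p(1-q)}\gtrsim 1$. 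Two trivial touch-ups: choose $c_2$ so that $H^2$ is strictly below $1/8$ (so $\mathrm{TV}<1/2$ strictly and the average error strictly exceeds $1/4$), and in the degenerate case $r=1$ replace the cluster--cluster swap by an inlier--outlier swap (still $\Theta(n)$ flipped edges), so the construction covers all instances allowed by the hypotheses.
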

This theorem gives a necessary condition for \emph{any} algorithm to succeed regardless of its computational complexity. It shows that Theorem~\ref{thm:main} is optimal up to logarithmic factors for all values of $ p $ and $ q $ when $ K=\Theta(n)$.

For smaller values of the minimum cluster size $ K $,  Theorem~\ref{thm:main} requires $K$ to be $\Omega(\sqrt{n})$ since the left hand side of (\ref{eq:main_cond}) is less than $ 1 $. This lower-bound is achieved when $ p $ and $ q $ are both constants independent of $ n $ and $ K $. There are reasons to believe that this requirement is unlikely to be improvable using polynomial-time algorithms. Indeed, the special case with $ p=1 $ and $q=\frac{1}{2}$ corresponds to the classical planted clique problem~\cite{alon1998hiddenClique}; finding a clique of size $ K=o(\sqrt{n}) $ is widely believed to be computationally hard even on average and has been used as a hard problem for cryptographic applications~\citep{Hazan2011Nash,Juel00cliqueCrypto}.

For other values of $ p $ and $ q $, no general and rigorous converse result exists. However, there is evidence suggesting that no other polynomial-time algorithm is likely to have better guarantees than our result in~\eqref{eq:main_cond}. The authors of~\cite{decelle2011asymptotic} show, using non-rigorous but deep arguments from statistical physics, that recovering the clustering is impossible in polynomial time if $\frac{p-q}{\sqrt{p}}=o\left( \frac{\sqrt{n}}{K}\right) $. Moreover, the work in~\cite{nadakuditi2012spectra} shows that a large class of spectral algorithms fail under similar conditions. In view of these results, it is possible that our algorithm is order-wise optimal with respect to all polynomial-time algorithms for all values of $ p $, $ q $ and~$ K $.

We give several further remarks regarding Theorem~\ref{thm:main}. 
\begin{itemize}
\item A nice feature of our result is that we only need ${p}-{q}$ to be large {\em as compared to} $\sqrt{{p}}$; several other existing results (see Table~\ref{tab:comparison}) require a lower bound (as a function of  $n$ and $ K $) on ${p}-{q}$ itself.
When $K$ is $\Theta(n) $, we allow ${p} $ and $ p-q $ to be as small as $\Theta\left(\log^4 (n)/n\right)$.

\item The number of clusters $ r $ is allowed to grow rapidly with~$ n $---sometimes called the high-dimensional setting~\cite{rohe2011stochasticblock}. In particular, our algorithm can recover up to $ r ={\Theta}(\sqrt{n}) $ equal-sized clusters when $ p-q=\Theta(1) $. Any algorithm with a better scaling would recover cliques of size $ o(\sqrt{n}) $, an unlikely task in polynomial time in light of the hardness of the planted clique problem discussed above.

\item  The number of unaffiliated nodes can be large, as many as $ n_2 = {\Theta}(n) = {\Theta}(n_1^2) $, which is attained when $ p-q, r $ are $ \Theta(1) $ and the clusters have equal size. In other words, almost all the nodes can be unaffiliated, and this is true even when there are multiple clusters that are not cliques~(i.e., $p<1  $).

\item Not all existing methods can handle non-uniform edge probabilities and node degrees, which often require special treatment (see e.g.,~\cite{chaudhuri2012extended}). This issue is addressed seamlessly by our method by definition of \gpp.
\end{itemize}

\subsection{Consequences and Comparison with Literature}
In this subsection we discuss the consequences of Theorem~\ref{thm:main} for specific planted problems and compare with existing work. Our results match the best existing results in all cases (up to logarithmic factors), and in many important settings lead to order-wise stronger guarantees.

\subsubsection{Standard Stochastic Blockmodel (a.k.a. Planted Partition Model)}
This model assumes that all clusters have the same size $ K $ with no unaffiliated nodes ($ n_2 = 0 $) and $ p>q $. We compared our result to past approaches and theoretical results in Table~\ref{tab:comparison}: For general values of $ p,q $ and $ K $, our result has the scaling $ p-q={\Omega}\left(\frac{\sqrt{pn}}{K}\right) $ and $ p=\Omega\left(\frac{n}{K^2}\right) $, which improves on all existing results by polynomial factors. This means that we can handle much noisier and sparser graphs, especially when the number of clusters $ r=n/K $ is growing.

\subsubsection{Planted $ r $-Disjoint-Cliques Problem}\label{sec:clique}
Here the task is to find a set of $ r $ disjoint cliques, each of size at least $ K $, that have been planted in an Erdos-Renyi random graphs $ G(n,q) $. Setting $ p=1 $ in Theorem~\ref{thm:main}, we obtain the following guarantee for this problem.
\begin{cor}\label{cor:clique}
For the planted $r $-disjoint-cliques problem, the formulation~\eqref{eq:cvx_prog}-\eqref{eq:weights} with $ t $ chosen according to Theorem~\ref{thm:main} finds the hidden cliques with probability at least $1-4 n^{-8}$ provided
\[
1-q\ge c_3 \max\left\{\frac{n}{K^{2}}, \frac{\log^{4}n}{K}\right\}, 
\]
where $ c_3 $ is an absolute constant.
\end{cor}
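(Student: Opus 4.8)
The plan is to derive Corollary~\ref{cor:clique} as a direct specialization of Theorem~\ref{thm:main}, since the planted $r$-clique problem is precisely the instance of \gpp with in-cluster edge probability $p=1$ (every pair inside a planted clique is deterministically connected) and cross-cluster/outlier probability $q<1$; here $n=n_1+n_2$ is the total number of nodes. Because Theorem~\ref{thm:main} already covers all of \gpp with $p>q$, the work reduces to substituting $p=1$ into its hypotheses and simplifying.

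First I would verify that the prescribed interval for $t$ stays non-empty. Setting $p=1$ in~\eqref{eq:t_cond} gives $\tfrac14+\tfrac34 q \le t \le \tfrac34+\tfrac14 q$, a valid interval strictly between $q$ and $1$ for every $q\in(0,1)$; for the canonical choice $q=\tfrac12$ it contains $t=\tfrac34$, matching the tuning-free value noted in Section~\ref{sec:clique}. Hence the weights~\eqref{eq:weights} are well defined and Theorem~\ref{thm:main} applies verbatim.

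The substantive step is the algebraic collapse of the main condition~\eqref{eq:main_cond}. With $p=1$ its left-hand side becomes
\[
\frac{p-q}{\sqrt{p(1-q)}}=\frac{1-q}{\sqrt{1-q}}=\sqrt{1-q},
\]
so~\eqref{eq:main_cond} reads $\sqrt{1-q}\ge c_1\max\{\sqrt{n}/K,\ \log^2 n/\sqrt{K}\}$. Squaring both (nonnegative) sides and using $\max\{a,b\}^2=\max\{a^2,b^2\}$ gives
\[
1-q\ge c_1^2\max\left\{\frac{n}{K^2},\ \frac{\log^4 n}{K}\right\},
\]
which is exactly the claimed bound with $c_3=c_1^2$. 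Theorem~\ref{thm:main} then yields that $Y^*$ is the unique optimum w.h.p., i.e., the hidden cliques are recovered exactly.

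There is no genuine analytic obstacle here---the derivation is a substitution---so the only point worth guarding against is that $p=1$ is a boundary case of \gpp. Since $p$ enters the model only as a \emph{lower} bound on the in-cluster density, the fully connected cliques are a legitimate (indeed extremal) instance, and the concentration arguments behind Theorem~\ref{thm:main} degenerate harmlessly: the in-cluster blocks are exactly all-ones, which only makes them easier to control. Thus no separate probabilistic argument is required.
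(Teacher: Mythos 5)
Your proposal is correct and matches the paper's own (implicit) argument: the corollary is obtained exactly by setting $p=1$ in Theorem~\ref{thm:main}, noting that the left-hand side of~\eqref{eq:main_cond} collapses to $\sqrt{1-q}$, and squaring. The additional sanity checks on the $t$-interval and the $p=1$ boundary case are fine but not needed beyond what the theorem already covers.
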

In the regime where $ r $ is allowed to scale with $ n $ and $ q $ is bounded away from zero, the best previous results are given in~\cite{mcsherry2001spectralpartitioning} with $ 1-q = {\Omega}(\frac{rn}{K^2}) $ and in~\cite{ames2012clustering} with $ 1-q={\Omega}(\frac{\sqrt{n}}{K}) $. Corollary~\ref{cor:clique} is stronger than both of them for large $ r $. In the special case with $ r=1 $ and $ q=1/2 $, which is the standard planted clique problem, the corollary guarantees recovery for the clique size $ K=\Omega(\sqrt{n}) $, matching the best known bound~\cite{alon1998hiddenClique}.

\subsubsection{The $ p<q $ Case}\label{sec:heterophily}
Given a graph $ A $ generated from the semi-random \gpp with in/cross-cluster densities $ p <q $, we can run our algorithm on the graph $ A' = \mathbf{1}\mathbf{1}^\top - A $, where $ \mathbf{1}\mathbf{1}^\top $ is the all-one matrix. Note that $ A' $ can be considered as generated from \gpp with in/cross-cluster densities $ p'=1-p$ and $q'=1-q $, where  $ p'>q' $. With this reduction, Theorem~\ref{thm:main} immediately yields the following guarantee.
\begin{cor}\label{cor:disa}
Under the semi-random \gpp with $ p<q $, the formulation~\eqref{eq:cvx_prog}-\eqref{eq:weights} applied to $  \mathbf{1}\mathbf{1}^\top - A $ with $ t $ satisfying
\[
\frac{3}{4}p+\frac{1}{4}q \le 1-t \le \frac{1}{4}p+\frac{3}{4}q
\]
finds the true clustering with probability  $1-4 n^{-8}$ provided
\[
q-p\ge c_3 \sqrt{(1-p)q}\max\left\{\frac{\sqrt{n}}{K}, \frac{\log^2 n}{\sqrt{K}}\right\}, 
\]
where $ c_3 $ is an absolute constant.
\end{cor}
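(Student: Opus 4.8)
The plan is to prove the corollary entirely by reduction to the $p>q$ case already settled by Lemma~\ref{lem:monotone} and Theorem~\ref{thm:main}; no new concentration or optimization argument is needed. The central observation is that complementing the adjacency matrix, $A' = \mathbf{1}\mathbf{1}^\top - A$, exchanges the roles of edges and non-edges and therefore turns a \gpp instance with densities $p<q$ into one with densities $p' = 1-p$ and $q' = 1-q$ satisfying $p'>q'$. Concretely, a within-cluster pair present in $A$ with probability at least $p$ is absent with probability at most $1-p=p'$, hence present in $A'$ with probability at least $p'$; the analogous statement holds for cross-cluster and outlier pairs with $q'=1-q$. Crucially, the clustering is untouched by complementation, so the target matrix $Y^*$ is identical for $A$ and $A'$.

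First I would verify that the complement also respects the semi-random structure. For $p<q$ the adversary removes within-cluster edges and adds cross-cluster/outlier edges; under complementation these become, respectively, additions of within-cluster edges and removals of cross-cluster/outlier edges. These are exactly the moves permitted by the semi-random \gpp with $p'>q'$, and moreover they are precisely the modifications ``aligned with'' $Y^*$ to which Lemma~\ref{lem:monotone} applies (additions where $y^*_{ij}=1$, deletions where $y^*_{ij}=0$). Thus it suffices to establish that the algorithm recovers $Y^*$ from the purely random complement, after which Lemma~\ref{lem:monotone} lifts the guarantee to the semi-random instance.

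Next I would translate the hypotheses of Theorem~\ref{thm:main}, applied to $A'$ with $(p',q')$, back into the original parameters. Substituting $p'=1-p$ and $q'=1-q$ into \eqref{eq:main_cond} gives $p'-q' = q-p$, $p' = 1-p$, and $1-q' = q$, so the success condition $\tfrac{p'-q'}{\sqrt{p'(1-q')}}\ge c_1\max\{\sqrt{n}/K,\log^2 n/\sqrt{K}\}$ becomes exactly $q-p \ge c_3\sqrt{(1-p)q}\,\max\{\sqrt{n}/K,\log^2 n/\sqrt{K}\}$ with $c_3=c_1$. Likewise, the admissible range \eqref{eq:t_cond} for the parameter of Theorem~\ref{thm:main} applied to $A'$ reads $\tfrac14 p' + \tfrac34 q' \le t \le \tfrac34 p' + \tfrac14 q'$; rewriting via $p'=1-p$, $q'=1-q$ and rearranging in terms of $1-t$ yields precisely $\tfrac34 p + \tfrac14 q \le 1-t \le \tfrac14 p + \tfrac34 q$, the range stated in the corollary. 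Feeding these in, Theorem~\ref{thm:main} shows $Y^*$ is the unique optimum of \eqref{eq:cvx_prog}--\eqref{eq:inequalities} run on the random $A'$ w.h.p., and Lemma~\ref{lem:monotone} then delivers the same conclusion for the semi-random $A'$, completing the reduction.

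Because every step is a substitution or an invocation of an already-proved result, there is no genuine analytical obstacle; the only thing requiring care is bookkeeping. In particular, I would check that the direction of the adversary's edge flips under complementation matches the alignment hypothesis of Lemma~\ref{lem:monotone} (additions on the support of $Y^*$, deletions off it) rather than the opposite, and I would dispose of the harmless diagonal discrepancy---the convention $a_{ii}=1$ is not preserved by $\mathbf{1}\mathbf{1}^\top - A$---by noting that the $n$ diagonal entries do not affect the order-wise analysis. These checks are routine, which is exactly why the corollary follows immediately once the $p>q$ theory is in place.
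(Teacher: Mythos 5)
Your reduction is exactly the paper's argument: complement the adjacency matrix, observe that the semi-random \gpp with $p<q$ maps to the semi-random \gpp with $p'=1-p>q'=1-q$, and substitute into Theorem~\ref{thm:main} and Lemma~\ref{lem:monotone}; the parameter translations you compute for both \eqref{eq:t_cond} and \eqref{eq:main_cond} match the corollary's statement. The paper gives only a one-sentence version of this, so your additional bookkeeping (checking the adversary's moves align with Lemma~\ref{lem:monotone} and dismissing the diagonal convention) is a faithful, slightly more careful rendering of the same proof.
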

This corollary implies guarantees for the planted coloring problem~\cite{alon1997coloring3} and the planted $ r $-cut~\cite{bollobas2004maxcut} (a.k.a. planted noisy coloring~\cite{decelle2011asymptotic}) problem. We are not aware of any exiting work that explicitly considers the \gpp with $ p<q $ in its general form (i.e., $ n_2>0 $, $ 1 > q > p > 0 $, and $ K =o(n)$ with potential non-random edges). However, since any guarantee for \gpp with $ p>q $ implies a guarantee for \gpp with $p<q $, Table~\ref{tab:comparison} provides a comparison with existing work when $ n_2 = 0 $ and the edge probabilities and cluster sizes are uniform. Again our guarantee outperforms all existing ones.

\subsubsection{Planted Coloring Problems}
This is a special case of the above setting, where $ p=0$, $ n_2=0 $ and the goal is to find the $ r $ planted groups of colored nodes with no edge between nodes with the same color. The best existing result $ q=\Omega\left(\frac{n}{K^2}+\frac{\log n}{K}\right)$ is achieved by various algorithms; see e.g.,~\cite{alon1997coloring3,coja2004coloringSemirandom}. By Corollary~\ref{cor:disa}, our algorithm succeeds when $ q = \Omega\left( \frac{n}{K^2} + \frac{\log^4n}{K} \right) $. We match the best existing algorithms for $ K=O(n/\log^4(n)) $, and are off by a few log factors for larger $ K $. 

\subsubsection{Clustering Partially Observed Graphs}

In many applications the pairwise relations in the graph are  partially observed, meaning that the values of $ A_{ij} $ are known only for a subset of the pairs $ (i,j) $, and information of other pairs is impossible or too expensive to obtain~\cite{Jalali2011clustering,shamir2011budget}. A standard and natural model for this setting is as follows: after the graph $ A $ is generated according to the \gpp with edge densities $p $ and~$ q $, each entry of $ A $ is erased (i.e., unobserved) independently with probability $ 1-s $, so $ s\in[0,1] $ is the observation probability. One possible approach is to set to zero all the entries of $ A $ that are unobserved, and apply our algorithm to the zero-imputed graph $ A'' $. Note that $ A'' $ can be considered as generated from the \gpp with in/cross-cluster densities equal to $ ps $ and~$ qs $, respectively. Theorem~\ref{thm:main} is powerful enough to imply the following strong guarantee for this simple approach.

\begin{cor}\label{cor:partial}
Under the above setting with $ p>q $, the formulation~\eqref{eq:cvx_prog}-\eqref{eq:weights} applied to $ A'' $ with $ t $ satisfying
\[
\frac{1}{4}ps+\frac{3}{4}qs \le t \le \frac{3}{4}ps+\frac{1}{4}qs
\]
finds the true clustering with probability $1-4 n^{-8}$ provided
\[
(p-q)\sqrt{\frac{s}{p}}\ge c_4 \max\left\{\frac{\sqrt{n}}{K}, \frac{\log^2 n}{\sqrt{K}}\right\}, 
\]
where $ c_4 $ is an absolute constant.
\end{cor}

The work in~\cite{Jalali2011clustering} considers the special case with $ p=1-q>1/2 $. Their algorithm explicitly handles unobserved pairs and requires the condition $ (2p-1)\sqrt{s} \gtrsim \frac{\sqrt{n}\log n}{K},$ which is the best known result in this setting. Corollary~\ref{cor:partial} matches this result up to at most a logarithmic factor, and in addition applies to settings with more general values of $ p $ and $ q $. The algorithm proposed in~\cite{oymak2011finding} also imputes unobserved pairs with zeros and requires $ (p-q)s\gtrsim \max\{\frac{\sqrt{n}}{K}, \sqrt{\frac{\log n}{K}}\}  $. Corollary~\ref{cor:partial} is order-wise better whenever~$ K\lesssim {n}/{\log^4n} $.

\subsection{Estimating $ t $ in Special Cases}\label{sec:est_p}

We argued in Section~\ref{sec:remark_alg} that specifying $ t $  in a completely data-driven way is ill-posed  for the general \gpp, e.g., when the clusters have a hierarchical structure. However, for some cases this can be done reliably with strong guarantees. Consider the standard stochastic blockmodel, where all the $ r=n/K $ clusters have the same size $ K $, the edge probabilities are uniform (i.e., equal to $ p $ within clusters and $ q $ across clusters, with $ p>q $), and there are no unaffiliated nodes ($ n_2 = 0 $) or non-random edges. Without lost of generality, we may re-label the nodes such that the $ l $-th cluster has nodes $ (l-1)K+1, (l-1)K+2, \ldots, lK $. Observe that the matrix $\bar{A} := \mathbb{E}\left[A\right]- (1-p)I$ is a matrix with blocks of $ p $ and $ q $'s,\footnote{Recall that we use the convention $ a_{ii}=1 $.} and therefore can be written as
$$
\bar{A} = \mathbf{1} \mathbf{1}^\top \otimes B,
$$
where $ \mathbf{1} $ is the all one vector in $ \mathbb{R}^K $, and $ B\in\mathbb{R}^{r\times r} $ equals $ p $ on the diagonal and $ q $ elsewhere. In words, $ \bar{A} $ is the Kronecker product of a $ K\times K $ all-one matrix $ \mathbf{1} \mathbf{1}^\top  $ and an $ r\times r $ circulant matrix $ B $.
The matrix $ \mathbf{1} \mathbf{1}^\top  $ has only one non-zero eigenvalue $ K $, and the matrix $ B $ has eigenvalues $ (p-q) + rq $ and $ p-q $ with multiplicities $ 1 $ and $ r-1 $, respectively. The eigenvalues of $\bar{A}$ are the products of the eigenvalues of $ \mathbf{1} \mathbf{1}^\top  $ and $ B $. Since $ n=Kr $, it follows that the eigenvalues of $\mathbb{E}\left[A\right] = \bar{A} + (1-p)I$ are:
$$ 
\begin{cases}
K(p-q)+nq+(1-p)&\text{with multiplicity } 1 ,\\ 
K(p-q)+(1-p)& \text{with multiplicity } r-1, \\
1-p& \text{with multiplicity } n-r;
\end{cases}
$$ 
see~\cite{giesen2005manypartition} for a similar derivation. Given these eigenvalues of $ \mathbb
{E}[A] $, we can compute the values of $ r $ and $ K  $ as there is a gap between the $ r $-th and $ (r+1)$-th eigenvalues, and then solve for $ p $, $ q $  (and therefore $ t $) using the first two eigenvalues. In practice, we use the observed matrix $ A$ instead of $ \mathbb{E}[A] $; see Algorithm~\ref{alg:est_pq}.

\begin{algorithm}[h]
\caption{Estimation of  $ t $ from Data}
\label{alg:est_pq}
	\begin{enumerate}
	\item Compute and sort the eigenvalues of $A$, denoted as $\hat{\lambda}_{1}\ge\hat{\lambda}_2\ge\ldots\ge\hat{\lambda}_{n}$.
	
	\item Let $\hat{r} = \arg \max_{i=2,\ldots,n-1} (\hat{\lambda}_i - \hat{\lambda}_{i+1})$ (break ties arbitrarily).
	Set $\hat{K}=n/\hat{r}$.
	
	\item Set $
	\hat{p}=\frac{\hat{K}\hat{\lambda}_{1}+(n-\hat{K})\hat{\lambda}_{2}-n}{n(\hat{K}-1)}$, $\hat{q} = \frac{\hat{\lambda}_1 - \hat{\lambda}_2}{n}$ and $ t=\frac{\hat{p}+\hat{q}}{2} $.
	\end{enumerate}
\end{algorithm}

The following theorem guarantees that the estimation errors are sufficiently small. 

\begin{thm}
\label{thm:est_pq}
Under the standard stochastic blockmodel and the condition~\eqref{eq:main_cond} in Theorem~\ref{thm:main}, the parameters estimated in Algorithm~\ref{alg:est_pq} satisfy the following with probability at least $1-4 n^{-8}$, where $ c_4 $ is an absolute positive constant:
\begin{align*}
\hat{K} = & K,\\
\max\left\{\left|\hat{p}-p\right|,\left|\hat{q}-q\right|\right\}  \le & c_{4}\frac{\sqrt{p(1-q)n}}{K},\\
t  \in & \left[\frac{1}{4}p +\frac{3}{4}q, \frac{3}{4}p + \frac{1}{4}q\right].
\end{align*}
\end{thm}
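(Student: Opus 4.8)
The plan is to analyze Algorithm~\ref{alg:est_pq} as a perturbation of its population counterpart, in which the observed eigenvalues $\hat\lambda_i$ of $A$ are replaced by the exact eigenvalues $\lambda_i$ of $\mathbb{E}[A]$. The excerpt already identifies the population spectrum: $\mathbb{E}[A]$ has eigenvalues $\lambda_1 = K(p-q)+nq+(1-p)$ (multiplicity $1$), $\lambda_2 = K(p-q)+(1-p)$ (multiplicity $r-1$, where $r=n/K$), and $\lambda_3 = 1-p$ (multiplicity $n-r$). A direct substitution shows the algorithm's formulas are exact at the population level: $\frac{\lambda_1-\lambda_2}{n}=q$, and $\frac{K\lambda_1+(n-K)\lambda_2-n}{n(K-1)}=p$ after the cancellation $K\lambda_1+(n-K)\lambda_2 = n[p(K-1)+1]$. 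Moreover the only nonzero gap among indices $i\ge 2$ is $\lambda_2-\lambda_3=K(p-q)$, occurring at $i=r$, so at the population level $\hat r=r$, $\hat K=K$, $\hat q=q$, and $\hat p=p$ hold exactly. The whole proof thus reduces to controlling how much the randomness of $A$ perturbs these quantities.

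First I would invoke a spectral concentration bound $\delta := \|A-\mathbb{E}[A]\| \le C\sqrt{np(1-q)}$ with high probability. This holds because $A-\mathbb{E}[A]$ is symmetric with independent, mean-zero, bounded off-diagonal entries whose variances are all at most $p(1-q)$ (since $q(1-q)\le p(1-q)$ and $p(1-p)\le p(1-q)$ when $p\ge q$), so standard random-matrix estimates apply. By Weyl's inequality this gives $|\hat\lambda_i-\lambda_i|\le \delta$ for every $i$.

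Next I would establish $\hat K=K$ exactly. For the genuine gap at $i=r$, Weyl gives $\hat\lambda_r-\hat\lambda_{r+1}\ge (\lambda_2-\delta)-(\lambda_3+\delta)=K(p-q)-2\delta$; for every other index $i\in\{2,\dots,n-1\}\setminus\{r\}$ the neighboring population eigenvalues coincide, so $\hat\lambda_i-\hat\lambda_{i+1}\le 2\delta$. The large gap $\lambda_1-\lambda_2=nq$ at $i=1$ is harmless because the algorithm maximizes only over $i\ge 2$. Hence once $K(p-q)-2\delta>2\delta$, the maximizer is uniquely $r$ and $\hat K=n/\hat r=K$. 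Substituting $\delta\le C\sqrt{np(1-q)}$, this requirement reads $\frac{p-q}{\sqrt{p(1-q)}}>\frac{4C\sqrt n}{K}$, which is guaranteed by the first term in the max of condition~\eqref{eq:main_cond} for $c_1$ large enough. With $\hat K=K$ fixed, the estimates $\hat p,\hat q$ are linear in $\hat\lambda_1,\hat\lambda_2$ and agree with $p,q$ at $\lambda_1,\lambda_2$, so propagating $|\hat\lambda_i-\lambda_i|\le\delta$ yields $|\hat q-q|\le \frac{2\delta}{n}$ and $|\hat p-p|\le \frac{K\delta+(n-K)\delta}{n(K-1)}=\frac{\delta}{K-1}$. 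The dominant bound $\frac{\delta}{K-1}=O(\sqrt{np(1-q)}/K)$ is exactly the claimed error rate (and the $\hat q$ bound is smaller since $K\le n$).

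Finally, since $t=\frac{\hat p+\hat q}{2}$, the triangle inequality gives $|t-\frac{p+q}{2}|\le \frac12(|\hat p-p|+|\hat q-q|)=O(\delta/K)$, and the target interval $[\frac14 p+\frac34 q,\ \frac34 p+\frac14 q]$ is precisely $\{t: |t-\frac{p+q}{2}|\le \frac{p-q}{4}\}$; the needed inequality $\delta/K\lesssim p-q$ is again implied by~\eqref{eq:main_cond}, so $t$ lands in the interval required by Theorem~\ref{thm:main}. The main obstacle is the spectral concentration bound of the second step, specifically obtaining the sharp $\sqrt{np(1-q)}$ scaling rather than a bound inflated by logarithmic or maximum-degree factors in the sparse regime; this is where the $\frac{\log^2 n}{\sqrt K}$ term of~\eqref{eq:main_cond} enters, forcing the in-cluster degree $pK$ to be polylogarithmically large so that the clean concentration estimate holds. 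Everything else is elementary perturbation bookkeeping.
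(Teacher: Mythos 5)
Your proposal is correct and follows essentially the same route as the paper's proof: bound $\|A-\mathbb{E}[A]\|$ by $O(\sqrt{p(1-q)n})$ via a standard random-matrix estimate, transfer this to the eigenvalues by Weyl's inequality, identify $r$ through the dominant spectral gap at index $r$ guaranteed by condition~\eqref{eq:main_cond}, and then propagate the eigenvalue errors linearly through the formulas for $\hat p$, $\hat q$ and $t$. Your closing observation that the $\log^2 n/\sqrt{K}$ term in~\eqref{eq:main_cond} is what licenses the clean concentration bound matches the hypothesis $\sigma\ge c_1 B\log^2 n/\sqrt{n}$ of the paper's spectral-norm lemma.
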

In particular, the estimated $ t $ satisfies the condition~\eqref{eq:t_cond} in Theorem~\ref{thm:main}. The above theorem also ensures that Algorithm~\ref{alg:est_pq} is a consistent estimator of the parameters $ p $ and $ q $ when condition~\eqref{eq:main_cond} is satisfied, which may be a result of independent interest.
Combining Theorem~\ref{thm:main} and Theorem~\ref{thm:est_pq}, we obtain a complete algorithm that is guaranteed to find the clusters for the standard stochastic blockmodel under the condition~\eqref{eq:main_cond}, without any knowledge of the parameters of the underlying generative model.

\section{Empirical Results}
\label{sec:expts}
In this section we discuss implementation issues of our algorithm, and provide empirical results on synthetic and real-world datasets.

\subsection{Implementation Issues} \label{sec:implementation}

The convex program~\eqref{eq:cvx_prog}--\eqref{eq:inequalities} can be solved using a general purpose
SDP solver, but this method does not scale well to problems with more
than a few hundred nodes. To facilitate a fast and efficient solution, we propose
to use a family of first-order algorithms called the Augmented Lagrange Multiplier
(ALM) method. Note that the program~\eqref{eq:cvx_prog}--\eqref{eq:inequalities} can be rewritten as
\begin{align}
 \min_{Y,S\in\mathbb{R}^{n\times n}} 
    & \quad  \lambda\Vert C \circ S \Vert_1 + \Vert Y\Vert_*\label{eq:L+S}\\
\mbox{s.t }  &\quad Y+S=A,\nonumber \\
   & \quad 0 \le y_{ij} \le 1, \forall i,j,\nonumber
\end{align}
where $ \lambda  :=\frac{1}{48\sqrt{n}}$, the matrix $ C\in \mathbb{R}^{n \times n} $ satisfies $ c_{ij} = \ca $ if $ a_{ij}=1 $ and $ c_{ij} = \cac $ otherwise, and $ \circ $ denotes the element-wise product. This problem can be recognized as a weighted version of the standard convex formulation of the low-rank and sparse matrix decomposition problem~\cite{candes2009robustPCA,chandrasekaran2011siam}, of which the numerical solution has been well studied. 
We adapt the ALM method in~\cite{Lin2009_ALM} to the above problem as given in Algorithm~\ref{alg:alm}.
\begin{algorithm}
\caption{ALM Method for the Program~\eqref{eq:L+S} of Minimizing Nuclear Norm plus Weighted $ \ell_1 $ Norm \label{alg:alm}}
\begin{algorithmic}
\STATE Input: $A,C\in\mathbb{R}^{n\times n}$. 
\STATE Initialize: $M^{(0)}=0$; $Y^{(0)}=0$; $S^{(0)}=0$; $\mu_{0}>0$;
$\alpha>1$; $k=0$, $ \lambda = \frac{1}{48\sqrt{n}} $.
\WHILE{not converge}
    \STATE $ (U,\Sigma,V)  =  \text{SVD}(A-S^{(k)}+\mu_{k}^{-1}M^{(k)}) $.
    \STATE $ \bar{Y}^{(k+1)}  =  U\mathcal{S}_{\mu_{k}^{-1}}(\Sigma)V $.
    \STATE  For all $ (i,j) $, $y^{(k+1)}_{ij} = \max\left\{ \min \left\{ \bar{Y}^{(k+1)}_{ij}, 1 \right\}  ,0 \right\}$.
    \STATE $ S^{(k+1)}  =  \mathcal{S}_{\mu_{k}^{-1}\lambda C}(A-Y^{(k+1)}+\mu_{k}^{-1}M^{(k)}) $.
    \STATE $ M^{(k+1)}  =  M^{(k)}+\mu_{k}(A-Y^{(k+1)}-S^{(k+1)}) $.
    \STATE $ \mu_{k+1} = \alpha \mu_k $, $ k  =  k+1 $.
\ENDWHILE 
\STATE Return $Y^{(k+1)},S^{(k+1)}$.
\end{algorithmic}
\end{algorithm}
Here $\mathcal{S}_{X}(\cdot):\mathbb{R}^{n\times n}\mapsto\mathbb{R}^{n\times n}$
is the element-wise weighted soft-thresholding operator, defined as
\begin{align*}
\left(\mathcal{S}_{X}(M)\right)_{ij} & =  \begin{cases}
m_{ij}-x_{ij}, & \quad \textrm{if } m_{ij}>x_{ij}\\
m_{ij}+x_{ij}, & \quad \textrm{if } m_{ij}<-x_{ij}\\
0, & \quad\textrm{otherwise,}
\end{cases}
\end{align*}
for any matrices $ M,X\in\mathbb{R}^{n \times n} $.
In other words, it shrinks each entry of $M$ towards zero by $ x_{ij}$. The unweighted version
$\mathcal{S}_{\epsilon}(\cdot) := \mathcal{S}_{\epsilon I}(\cdot)$ is also used. The parameters of the algorithm are set as $ \mu_0 = 1.25/\Vert A \Vert$ and $\alpha = 1.5 $ to suggested by~\cite{Lin2009_ALM}. Following~\cite{Lin2009_ALM}, it can be shown that the ALM method is guaranteed to converge to a global optimal solution.

While~\cite{Lin2009_ALM} does not prove a convergence rate for the ALM method, it is observed there that it converges Q-linearly. We observe a similar behavior, as shown in Figure~\ref{fig:0}. In the subsequent simulations, we use  $ \Vert A-Y^{(k)} - S^{(k)} \Vert_F /\Vert A \Vert_F \le 10^{-2}$ as the stopping criterion, so the number of iteration needed is usually small. The main bottleneck of the algorithm is computing the SVD in each iteration. Therefore, the time complexity of the algorithm is roughly the time for one SVD multiplied by the number of iterations. This can be compared with spectral clustering, which requires one SVD. The memory requirement of the ALM algorithm is $ \Theta(n^2) $, i.e., the same order as the space needed to store the graph.  It is possible to improve the space and time complexity by various approaches, such as only storing sparse and low-rank matrices and computing the first few singular values/vectors instead of a full SVD; see~\cite{Lin2009_ALM} for more discussion on implementation details.

\begin{figure}
\centering
\includegraphics[width= 0.8 \columnwidth]{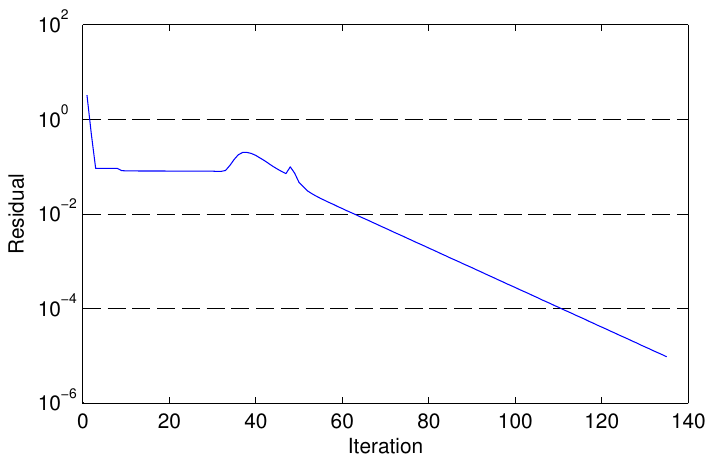}
\caption{\label{fig:0}Convergence of the ALM method. The figure shows the residual $ \Vert A-Y^{(k)} - S^{(k)} \Vert_F /\Vert A \Vert_F$ at each iteration. The plot is generated under the setting with $ n=1000 $ nodes, $ r=5 $ clusters with equal size $ K = 200 $, and $ p=0.35 $, $ q=0.1 $.}
\end{figure}

\subsection{Simulations}
We perform experiments on synthetic data, and compare with other methods. We generate a graph using the stochastic blockmodel with $ n=1000 $ nodes, $r=5$ clusters with equal size $ K=200 $, and $p,q\in [0,1]$. We apply our method to the graph, where we pick $ t $ using Algorithm~\ref{alg:est_pq}
and solve the optimization problem using Algorithm~\ref{alg:alm}.
Due to numerical accuracy, the output $ \hat{Y} $ of our algorithm may not be strictly integer, so we do the following simple rounding: compute the mean $ \bar{y} $ of the entries of $ \hat{Y} $, and round each entry of $ \hat{Y} $ to $ 1 $ if it is greater than $ \bar{y} $, and $ 0 $ otherwise. We measure the error by $ \Vert Y^* - \textrm{round}(\hat{Y}) \Vert_1 $, which equals the number of misclassified pairs. We say our method succeeds if it misclassifies less than $0.1\%$ of the pairs.

For comparison, we consider three alternative methods: (1) Single-Linkage clustering (SLINK)~\cite{sibson1973slink}, which is a hierarchical clustering method that merges the most similar clusters in each iteration. We use the difference of neighbors, namely  $ \Vert A_{i\cdot} - A_{j\cdot} \Vert_1 $, as the distance measure of nodes $ i $ and $ j $, and terminate when SLINK finds a clustering with $ r=5 $ clusters. (2) A spectral clustering method~\cite{von2007spectral}, where we run SLINK on the top $ r=5 $ singular vectors of $ A $. (3) The low-rank-plus-sparse approach~\cite{Jalali2011clustering, oymak2011finding}, followed by the  rounding scheme described in the last paragraph. Note the first two methods assume knowledge of the number of clusters $ r $, which is not available to our method. 

For each value of $ q $, we find the smallest $ p $ for which a method succeeds, and average over $20$ trials. The results are shown in Figure~\ref{fig:1}(a), where the area above each curve corresponds to the range of feasible $ (p,q) $ for each method. It can been seen that our method outperforms all others, in that we succeed for a strictly larger range of $ (p,q) $. Figure~\ref{fig:1}(b) shows more detailed results for sparse graphs ($ p\le 0.3, q\le 0.1 $), for which SLINK and the low-rank-plus-sparse approach  completely fail, while our method significantly outperforms the spectral method, the only alternative method that works in this regime. The running time of each method is shown in Figure~\ref{fig:1} (c). Our approach and the low-rank-plus-sparse approach (both based on convex optimization) require more computational time than the simpler spectral method and SLINK. This suggests a tradeoff between the statistical and computational performance of clustering algorithms.

\begin{figure*}
\centering
\begin{tabular}{ccc}
\includegraphics[width=0.63\columnwidth]{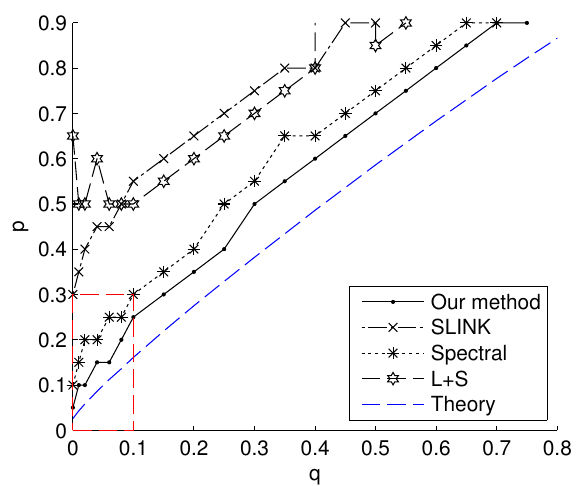} 
&\includegraphics[width=0.63\columnwidth]{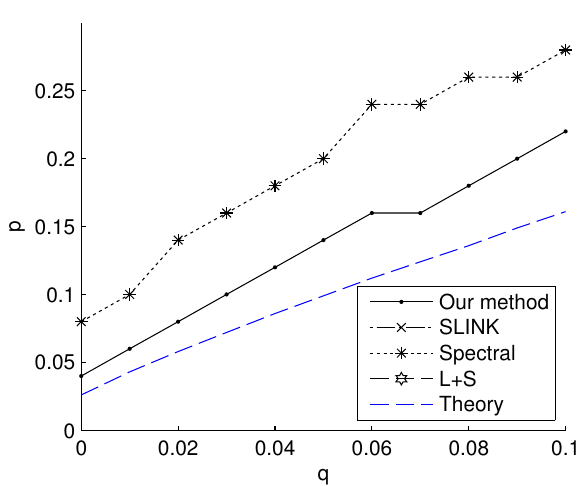} 
&\includegraphics[width=0.73\columnwidth]{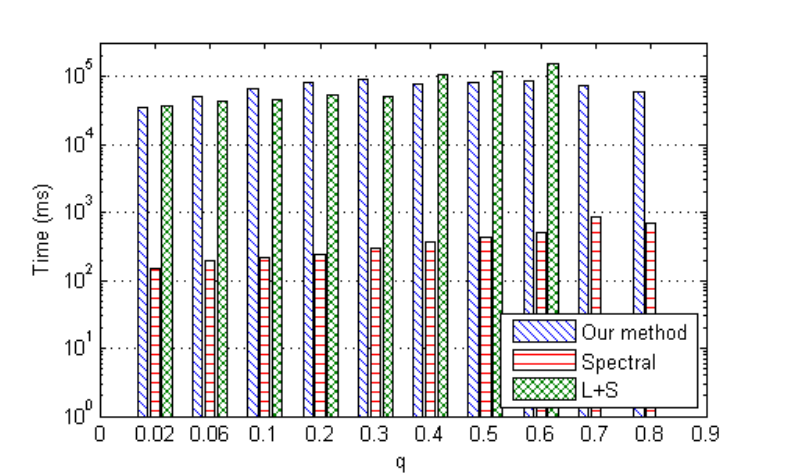}\\
(a)&(b)&(c)
\end{tabular}

\caption{\label{fig:1}Comparison of our method with Single-Linkage clustering (SLINK), spectral clustering, and low-rank-plus-sparse (L+S) approach. (a) The area above each curve is the values of $ (p,q) $ for which a method successfully recovers the underlying true clusters. The dash line corresponds to the bound $ p-q \ge {\sqrt{p(1-q)n}}/{K}$ predicted by our theoretical result in Theorem~\ref{thm:main}. (b) More detailed results for the area in the box in (a), corresponding to sparse graphs. (c) Running times (in milliseconds) for each methods running on different values of $ q $ and the smallest $ p $ for which the method succeeds. A missing bar means a method fails for any $ p $. The running time of the SLINK method is negligible compared to the other methods and is thus displayed in the plot. The experiments are conducted on synthetic data with $ n=1000 $ nodes and $ r=5 $ clusters with equal size $ K = 200 $, using a computer with a Pentium Dual-Core 3.2GHz CPU and 4.00 GB memory.}
\end{figure*}

\subsection{Real-world Collaboration Graph}

We evaluate our method on the NIPS Conference Papers Vol. 0-12 Dataset.\footnote{Available at \url{http://www.cs.nyu.edu/~roweis/data.html}} It contains the authorship relation of $ 2037 $ authors and $ 1740 $ papers. We use this dataset to generate a $ 2037 \times 2037$ graph of the authors by connecting co-authors; that is, we place an edge between a pair of authors if they have written at least one NIPS paper together. This is a sparse graph with an overall edge density of $ 0.002 $.

We apply the four methods to this graph and compare their performance. For fairness, we force all methods to partition the authors into $ r=8 $ clusters as follows: the SLINK and spectral algorithms are the same as in the previous sub-section;  for our method and the low-rank-plus-sparse approach, we apply SLINK to their output $ \hat{Y} $ with $ \Vert \hat{Y}_{i\cdot}  - \hat{Y}_{j\cdot}\Vert_1 $ as the distance measure to obtain $ 8 $ clusters; the parameter $ t $ for our method is estimated using Algorithm~\ref{alg:est_pq} with $ \hat{r} $ fixed to $ 8 $. We measure the quality of the solutions by computing the in-cluster and cross-cluster edge densities, which
are shown in Table~\ref{table:nips}. The clustering produced by
our method has higher in-cluster density and lower cross-cluster density.

\begin{table}
\caption{\label{table:nips}Clustering quality on the NIPS datasets}
\centering

\begin{tabular}{rcc}
\hline & In-Cluster edge density & Cross-cluster edge density\\
\hline Our method & 109$ \times 10^{-4} $& 1.83$ \times 10^{-4} $\\
       SLINK & 26$ \times 10^{-4} $ & 3.42$ \times 10^{-4} $\\
       Spectral & 64$ \times 10^{-4} $ & 1.88$ \times 10^{-4} $\\
       L+S & 86$ \times 10^{-4} $ & 6.07$ \times 10^{-4} $\\
\hline
\end{tabular} 

\end{table}

\section{Proof of Lemma~\ref{lem:monotone}}
\label{sec:proof_mono}

In this section we establish the monotone lemma. Set $\lambda=\frac{1}{48\sqrt{n}}$. Define $\Omega_{+}=\{ (i,j):a_{ij}=0,\tilde{a}_{ij}=1\} $
and $\Omega_{-}=\{ (i,j):a_{ij}=1,\tilde{a}_{ij}=0\} $.
Let $Y\neq \widehat{Y}$ be an arbitrary alternative feasible solution obeying $ 0\le y_{ij} \le 1, \forall i,j $. By optimality of $\widehat{Y}$ to the original program,
we have
\begin{align}\label{eq:monotone1}
 &\left(\ca\!\sum_{a_{ij}=1}\!\hat{y}_{ij}\!-\!\cac\!\sum_{a_{ij}=0}\!\hat{y}_{ij}\right)-\frac{1}{\lambda}\left\Vert \widehat{Y}\right\Vert_{*}\nonumber\\
>&\left(\ca\!\sum_{a_{ij}=1}\!y_{ij}\!-\!\cac\!\sum_{a_{ij}=0}\!y_{ij}\right)-\frac{1}{\lambda}\left\Vert Y\right\Vert _{*}.
\end{align}
Next, by definition of $\tilde{A}$, $ \Omega_+ $ and $ \Omega_- $, we have
\begin{align}\label{eq:monotone2}
&\left(\ca\!\sum_{\tilde{a}_{ij}=1}\!\hat{y}_{ij}\!-\!\cac\!\sum_{\tilde{a}_{ij}=0}\!\hat{y}_{ij}\right)\!-\!\left(\ca\!\sum_{a_{ij}=1}\!\hat{y}_{ij}\!-\!\cac\sum_{a_{ij}=0}\!\hat{y}_{ij}\right)\nonumber\\
=&\!\sum_{(i,j)\in\Omega_{+}}(\ca+\cac);
\end{align}
and
\begin{align}\label{eq:monotone3}
   & \left(\ca\!\sum_{a_{ij}=1}\!y_{ij}\!-\!\cac\!\sum_{a_{ij}=0}\!y_{ij}\right)\!-\!\left(\ca\!\sum_{\tilde{a}_{ij}=1}\!y_{ij}\!-\!\cac\!\sum_{\tilde{a}_{ij}=0}\!y_{ij}\right)\nonumber\\
  = & (\ca+\cac)\!\sum_{(i,j)\in\Omega_{-}}\!y_{ij}-(\ca+\cac)\!\sum_{(i,j)\in\Omega_{+}}\!y_{ij}\nonumber\\
  \ge & -\!\sum_{(i,j)\in\Omega_{+}}\!(\ca+\cac),
\end{align}
where we use $0\le y_{ij}\le1$ for all $(i,j)$ in the last inequality.
Summing the L.H.S. and R.H.S. of~\eqref{eq:monotone1}--\eqref{eq:monotone3} establishes that
\begin{align*}
&\left(\ca\!\sum_{\tilde{a}_{ij}=1}\!\hat{y}_{ij}\!-\!\cac\!\sum_{\tilde{a}_{ij}=0}\!\hat{y}_{ij}\right)-\frac{1}{\lambda}\left\Vert \widehat{Y}\right\Vert_{*}\\
>&\left(\ca\!\sum_{\tilde{a}_{ij}=1}\!y_{ij}\!-\!\cac\!\sum_{\tilde{a}_{ij}=0}\!y_{ij}\right)-\frac{1}{\lambda}\left\Vert Y\right\Vert_{*}.
\end{align*}
Since $Y$ is arbitrary, we conclude that $ \widehat{Y} $ is the unique optimal solution to the modified program.

\section{Proof of Theorem~\ref{thm:main}}
\label{sec:proof_main}

We prove our main theorem in this section. In the remainder of the paper, \textit{with high probability} (w.h.p.) means with probability at least $1-4 n^{-12}$. The proof consists of three main steps, which we elaborate below.

\subsection{Step 1: Reduction to Homogeneous Edge Probabilities}\label{sec:homogeneous}

We show that it suffices to assume that the in-cluster edge probability is uniformly $p$, and the across-cluster edge probability is uniformly $q$. In the heterogeneous model, suppose an edge is placed between nodes $i$ and $j$
with probability $p_{ij}$ if they are in the same cluster,
where $p_{ij}\ge p$. This is equivalent to the following
two-step model: first flip a coin with head probability $p$,
and add the edge if it is head; if it is tail, then flip another coin
and add the edge with probability $\frac{p_{ij}-p}{1-p}$. By the
monotone property in Lemma~\ref{lem:monotone}, we know that if our convex program succeeds on the graph generated in the first step, then it also succeeds for the second step, because more in-cluster edges are added. Similarly, an across-cluster edge with probability $ q_{ij}\le q $ can be generated equivalently as follows: (1) add an edge with probability~$ q $; (2) if an edge is added in the first step, remove it with probability $ \frac{q-q_{ij}}{q} $. Monotonicity can then be applied.  Therefore, heterogeneous edge probabilities only make the probability of success higher, and thus we only need to prove the homogeneous case.

\subsection{Step 2: Optimality Condition}\label{sec:opt_cond}

We need some additional notation. Both $ m_{ij} $ and $ (M)_{ij} $ denote the $ (i,j) $-th entry of the matrix $ M $, and $ \langle M,N\rangle := \text{trace}(M^\top N) $ is the inner product between two matrices $ M $ and $ N $ with the same size. Four matrix norms are used: the spectral norm $ \Vert M\Vert $ (the largest singular value of $ M $), the nuclear norm $ \Vert M \Vert_* $ (the sum of the singular values of $ M $), the matrix $ \ell_\infty $ norm $ \Vert M \Vert_{\infty} := \max_{i,j} \vert m_{ij}\vert $, and the matrix $ \ell_1 $ norm $  \Vert M \Vert_1 := \sum_{i,j} \vert m_{ij}\vert $. We denote the singular value decomposition of  $Y^{*}$ (notice $Y^*$ is symmetric and positive definite) by $U_{0}\Sigma_{0}U_{0}^{\top}$. For any matrix $ M $, we define $P_{T}(M) := U_0U_0^\top M + MU_0U_0^\top - U_0U_0^\top MU_0U_0^\top $.  For a set $ \Omega $ of matrix indices, let $ P_\Omega(M) $ be the matrix obtained by setting the entries of $ M $ outside $ \Omega $ to zero, and we use $ \sum_{\Omega} $ as a shorthand of $ \sum_{(i,j)\in\Omega}$. Define the sets  $\mca:=\mathrm{support}(A)  $ and $ R  := \mathrm{support}(Y^*) = \mathrm{support}(U_0U_0^\top) $. 

The true cluster matrix $ Y^* $ is an optimal solution to the program~\eqref{eq:cvx_prog}--\eqref{eq:inequalities} if
\begin{equation}\label{eq:basic_opt}
\lambda \ca \sum_{\mca} (y^*_{ij} - y_{ij}) - \lambda \cac\sum_{\mca^c}(y^*_{ij} - y_{ij}) - (\Vert Y^*\Vert_* - \Vert Y\Vert_*) \ge 0
\end{equation}
for all feasible $ Y $ obeying~\eqref{eq:inequalities}. Suppose there is a matrix $ W $ that satisfies
\begin{align}\label{eq:W_subgrad}
\Vert W \Vert \le 1,
P_T(W) = 0.
\end{align}
The matrix $ U_0U_0^\top + W$ is a subgradient of $ f(X) = \Vert X\Vert_* $ at $ X=Y^* $, so $ \Vert Y\Vert_* - \Vert Y^*\Vert_* \ge \langle U_0U_0^\top +W, Y-Y^*\rangle $ for all~$ Y $. Then, we see that~\eqref{eq:basic_opt} is implied by
\begin{align}\label{eq:basic_opt2}
&\lambda \ca \!\sum_{\mca} (y^*_{ij} \!-\! y_{ij}) \!-\! \lambda \cac\!\sum_{\mca^c}(y^*_{ij} \!-\! y_{ij}) \!+\! \langle U_0U_0^\top\!+\!W, Y\!-\!Y^*\rangle \nonumber\\
\ge&  0,\quad\forall Y \in \left\{X: 0\le X_{ij} \le 1, \forall (i,j)\right\}.
\end{align}
The above inequality holds in particular for any feasible $ Y $ of the form  $ Y=Y^* - e_i e_j^\top $ with $ (i,j) \in R$ or $ Y=Y^* + e_i e_j^\top $ with $ (i,j) \in R^c$. This leads to the following element-wise inequalities:
\begin{equation}\label{eq:W_exact}
\begin{aligned}
-\lambda \cac - (U_0U_0^\top + W)_{ij} & \ge 0, \quad\forall (i,j)\in R\cap\mca^c,\\
-\lambda \ca + w_{ij} & \ge 0, \quad\forall (i,j)\in R^c\cap\mca,\\
\lambda \ca - (U_0U_0^\top + W)_{ij} & \ge 0, \quad\forall (i,j)\in R\cap\mca,\\
\lambda \cac + w_{ij} & \ge 0, \quad\forall (i,j)\in R^c\cap\mca^c.
\end{aligned}
\end{equation}
It is easy to see that these inequalities are actually equivalent to~\eqref{eq:basic_opt2}, so together with~\eqref{eq:W_subgrad} they form a sufficient condition for the optimality of $ Y^* $.

Finding a ``dual certificate'' $ W $ obeying the exact conditions~\eqref{eq:W_subgrad} and~\eqref{eq:W_exact} is difficult, and does not guarantee uniqueness of the optimum. Instead, we consider an alternative sufficient condition that only requires a $ W $ that \emph{approximately} satisfies the exact conditions. This is done in Proposition~\ref{lem:opt_condition} below (proved in Section~\ref{sec:proof_opt_cond}), which significantly simplifies the construction of $ W $. Note that condition (b) in the proposition is a relaxation of the equality in~\eqref{eq:W_subgrad}, whereas condition (c) tightens~\eqref{eq:W_exact}. Setting $ \epsilon=0 $ and changing equalities to inequalities in the proposition recover the exact conditions.
\begin{prop}
 \label{lem:opt_condition} 
$Y^{*}$ is the unique optimal solution to the program~\eqref{eq:cvx_prog}--\eqref{eq:inequalities}, if there exists a matrix $W\in\mathbb{R}^{n\times n}$
and a number $0< \epsilon<1$ that satisfy the following conditions: (a) 
$\left\Vert W\right\Vert \le1$,
(b) $\left\Vert P_{T}(W)\right\Vert _{\infty}\le\frac{\epsilon}{2}\lambda\min\left\{ \cac,\ca\right\} $,
and (c)
\begin{equation*}\label{eq:W_approx}
\begin{aligned}
-(1+\epsilon)\lambda \cac - (U_0U_0^\top +W)_{ij} &= 0, \quad\forall (i,j)\in R\cap\mca^c,\\
  -(1+\epsilon)\lambda \ca + w_{ij} &= 0, \quad\forall (i,j)\in R^c\cap\mca,\\
  (1-\epsilon)\lambda \ca - (U_0U_0^\top +W)_{ij} &\ge 0, \quad\forall (i,j)\in R\cap\mca,\\
  (1-\epsilon)\lambda \cac +w_{ij} &\ge 0, \quad\forall (i,j)\in R^c\cap\mca^c.
\end{aligned}
\end{equation*}
\end{prop}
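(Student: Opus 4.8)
The plan is to prove this by a standard convex-duality (dual-certificate) argument: I will use the given $W$ and $\epsilon$ to show that the objective \emph{strictly} decreases in moving from $Y^*$ to any other feasible point, which delivers optimality and uniqueness at once. Write $\lambda=\frac{1}{48\sqrt n}$ and let $g(Y):=\lambda\ca\sum_{\mca}y_{ij}-\lambda\cac\sum_{\mca^c}y_{ij}-\Vert Y\Vert_*$ be the rescaled objective, so that solving~\eqref{eq:cvx_prog} is the same as maximizing $g$. Fix any feasible $Y\neq Y^*$ and set $\Delta:=Y-Y^*$. The structural fact I will lean on throughout is that feasibility $0\le y_{ij}\le1$ together with $y^*_{ij}\in\{0,1\}$ forces the sign pattern $\Delta_{ij}\le0$ on $R=\mathrm{support}(Y^*)$ and $\Delta_{ij}\ge0$ on $R^c$. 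The goal is to show $g(Y^*)-g(Y)>0$.

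First I would linearize the nuclear norm. Since $\Vert W\Vert\le1$ by (a), the matrix $G:=U_0U_0^\top+P_{T^\perp}(W)$, where $P_{T^\perp}(W):=W-P_T(W)=(I-U_0U_0^\top)W(I-U_0U_0^\top)$, satisfies $P_T(G)=U_0U_0^\top$ and $\Vert P_{T^\perp}(G)\Vert=\Vert(I-U_0U_0^\top)W(I-U_0U_0^\top)\Vert\le\Vert W\Vert\le1$, so $G$ is a valid subgradient of $\Vert\cdot\Vert_*$ at $Y^*$; hence $\Vert Y\Vert_*-\Vert Y^*\Vert_*\ge\langle G,\Delta\rangle$. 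Because $G=U_0U_0^\top+W-P_T(W)$, I have $\langle G,\Delta\rangle=\langle U_0U_0^\top+W,\Delta\rangle-\langle P_T(W),\Delta\rangle$. Substituting into $g(Y^*)-g(Y)=-\lambda\ca\sum_{\mca}\Delta_{ij}+\lambda\cac\sum_{\mca^c}\Delta_{ij}+(\Vert Y\Vert_*-\Vert Y^*\Vert_*)$ and regrouping yields the clean lower bound
\[
g(Y^*)-g(Y)\ \ge\ \underbrace{\Big(-\lambda\ca\textstyle\sum_{\mca}\Delta_{ij}+\lambda\cac\sum_{\mca^c}\Delta_{ij}+\langle U_0U_0^\top+W,\Delta\rangle\Big)}_{=:S}-\langle P_T(W),\Delta\rangle .
\]

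The core of the argument is to bound $S$ below by splitting the indices into the four regions $R\cap\mca$, $R\cap\mca^c$, $R^c\cap\mca$, $R^c\cap\mca^c$ and reading off the coefficient of $\Delta_{ij}$ in each (recalling that $U_0U_0^\top$ vanishes on $R^c$). The four rows of condition (c) are arranged precisely so that, after multiplying by $\Delta_{ij}$ and using its known sign, every contribution is nonnegative: the two \emph{equality} rows ($R\cap\mca^c$ and $R^c\cap\mca$) contribute exactly $\epsilon\lambda\cac|\Delta_{ij}|$ and $\epsilon\lambda\ca|\Delta_{ij}|$, while the two \emph{inequality} rows contribute at least the analogous quantities. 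Summing gives $S\ge\epsilon\lambda\min\{\ca,\cac\}\Vert\Delta\Vert_1$ with $\Vert\Delta\Vert_1=\sum_{ij}|\Delta_{ij}|$. Finally I would absorb the error term using (b): $|\langle P_T(W),\Delta\rangle|\le\Vert P_T(W)\Vert_\infty\Vert\Delta\Vert_1\le\frac{\epsilon}{2}\lambda\min\{\ca,\cac\}\Vert\Delta\Vert_1$. Combining the two estimates,
\[
g(Y^*)-g(Y)\ \ge\ \tfrac{\epsilon}{2}\,\lambda\min\{\ca,\cac\}\,\Vert\Delta\Vert_1\ >\ 0 \quad\text{for every }\Delta\neq0,
\]
which shows $Y^*$ is the \emph{unique} optimum.

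The step I expect to demand the most care is the four-region case analysis of $S$: one must pair each row of (c) with the correct sign of $\Delta_{ij}$ and track exactly where $U_0U_0^\top$ is supported, and then verify that the $\epsilon$-slack dominates the perturbation $\langle P_T(W),\Delta\rangle$. The factor $\tfrac12$ in (b) is what leaves the strictly positive surplus $\tfrac{\epsilon}{2}\lambda\min\{\ca,\cac\}\Vert\Delta\Vert_1$ after cancellation, and this surplus is precisely what upgrades optimality to uniqueness. Note that the proposition only \emph{assumes} the existence of such a $W$; the genuinely hard part---explicitly constructing a $W$ meeting (a)--(c) under condition~\eqref{eq:main_cond} via concentration arguments---is deferred to the subsequent sections.
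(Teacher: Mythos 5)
Your proposal is correct and follows essentially the same route as the paper's proof: linearize the nuclear norm via the subgradient $U_0U_0^\top+P_{T^\perp}(W)$, split the linear term over the four regions $R\cap\mca$, $R\cap\mca^c$, $R^c\cap\mca$, $R^c\cap\mca^c$ using the sign pattern of $\Delta$ and condition (c) to extract the surplus $\epsilon\lambda\min\{\ca,\cac\}\Vert\Delta\Vert_1$, and absorb $\langle P_T(W),\Delta\rangle$ with condition (b) to leave a strictly positive gap. The four-region bookkeeping and the final estimate $\frac{\epsilon}{2}\lambda\min\{\ca,\cac\}\Vert\Delta\Vert_1$ match the paper exactly (up to an overall factor of $\lambda$ in the normalization of the objective).
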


\subsection{Step 3: Constructing $W$}\label{sec:constructW}
 
We build a $W$ that satisfies
the conditions in Proposition~\ref{lem:opt_condition} w.h.p. We use $ \mathbf{1} $ to denote the all-one column vector in $ \mathbb{R}^n $, so $\mathbf{11}^\top $ is the all-one $ n\times n $ matrix.  Let
$\H :=\{(i,i),i=1,\ldots,n\}$ be the set of diagonal entries. For an $ \epsilon$ to be specified later, we define
$W=W_{1}+W_{2}+W_{3}+W_{4}$ with $W_{i}$ given by 
\begin{align*}
W_{1}  = & -P_{R\cap\mca^{c}}(U_{0}U_{0}^{\top})+\frac{1-p}{p}P_{R\cap\mca}(U_{0}U_{0}^{\top}),\\
W_{2}  = &
(1+\epsilon)\lambda \cac\left[-P_{R\cap\mca^{c}}(\mathbf{11^{\top}})+\frac{1-p}{p}P_{R\cap\mca}(\mathbf{11^{\top}})\right],\\
W_{3}  = &
(1+\epsilon)\lambda \ca\left[P_{(R^{c}\cap \H^{c})\cap\mca}(\mathbf{11^{\top}})-\frac{q}{1-q}P_{(R^{c}\cap \H^{c})\cap\mca^{c}}(\mathbf{11^{\top}})\right],\\
W_{4}  = & (1+\epsilon)\lambda \ca P_{R^c}(I),
\end{align*}
where $ I $ is the identity matrix. We briefly explain the ideas behind the construction. Each of the matrices $ W_1 $, $ W_2 $ and $ W_3 $ is the sum of two terms. The first term is derived from the equalities in condition~(c) in Proposition~\ref{lem:opt_condition}. The second term is constructed in such a way that each $ W_i $ is a  zero-mean random matrix (due to the randomness in the set $ \mca $), so it is likely to have small norms and satisfy conditions~(a) and~(b). The matrix $W_{4}$ accounts for the unaffiliated nodes. In particular, it is a diagonal matrix with $(W_{4})_{ii}$ being non-zero if and only if $ i\in V_2 $

The following proposition (proved in Section~\ref{sec:proof_WisGood}) shows that $W$ indeed satisfies all the desired conditions w.h.p., hence establishing Theorem~\ref{thm:main}.
\begin{prop}
\label{prop:WisGood} Under the conditions in Theorem~\ref{thm:main},  $W$ constructed above satisfies the conditions (a)--(c) in Proposition~\ref{lem:opt_condition} w.h.p. with 
$$\epsilon:=\frac{48}{\sqrt{t(1-t)}}\max\left\{ \frac{\sqrt{n}}{K},\sqrt{\frac{\log^{4}n}{K}}\right\}. $$ 
\end{prop}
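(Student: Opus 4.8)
The plan is to verify the three conditions (a), (b), (c) of Proposition~\ref{lem:opt_condition} separately for the explicit matrix $W = W_1 + W_2 + W_3 + W_4$, with $\epsilon$ set to the stated value. The decomposition is engineered so that condition (c) holds essentially by construction, whereas (a) and (b) are the genuinely probabilistic statements, requiring concentration of the three zero-mean random matrices $W_1,W_2,W_3$. Throughout I would use that $U_0U_0^\top = \sum_c K_c^{-1}\mathbf{1}_c\mathbf{1}_c^\top$ is block-diagonal and averages within clusters, that the index sets $R$, $\mca$, $E$ are all symmetric so that each $W_i$ is symmetric, and that $t\in[\tfrac14 p+\tfrac34 q,\ \tfrac34 p+\tfrac14 q]$ guarantees $q<t<p$ with both gaps of order $p-q$.

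\textbf{Condition (c).} First I would evaluate the four families of entries directly. On $R\cap\mca^c$ and on $R^c\cap\mca$ the two \emph{equalities} hold identically: the first terms of $W_1$ and $W_2$ cancel $U_0U_0^\top$ and leave exactly $-(1+\epsilon)\lambda\cac$, while on $R^c\cap\mca$ the off-diagonal part of $W_3$ (and, for outlier diagonal entries, $W_4$) produces exactly $(1+\epsilon)\lambda\ca$. The two remaining \emph{inequalities}, on $R\cap\mca$ and $R^c\cap\mca^c$, reduce after dividing by $\lambda\ca$ (resp. $\lambda\cac$) to the deterministic bounds $(1-\epsilon)\ge (pK_c\lambda\ca)^{-1} + (1+\epsilon)\tfrac{t}{1-t}\tfrac{1-p}{p}$ and $(1-\epsilon)\tfrac{t}{1-t}\ge (1+\epsilon)\tfrac{q}{1-q}$, using $\ca\cac=1$ and $\cac/\ca = t/(1-t)$. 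Since $t<p$ makes $\tfrac{t}{1-t}\tfrac{1-p}{p}<1$ and $t>q$ makes $\tfrac{t}{1-t}>\tfrac{q}{1-q}$, both hold once $\epsilon$ and the residual $(pK_c\lambda\ca)^{-1}=\Theta(\sqrt n/(pK_c))$ are small enough; I would check that this is precisely what~\eqref{eq:main_cond} together with the stated $\epsilon$ buys, using $K_c\ge K$.

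\textbf{Conditions (a) and (b).} For (a) I would bound $\|W\|\le\sum_i\|W_i\|$. The term $W_4$ is a diagonal matrix of height $(1+\epsilon)\lambda\ca$, hence negligible. Each of $W_1,W_2$ is block-diagonal over clusters with independent, zero-mean, bounded entries (the cancellation $p\cdot\frac{1-p}{p}-(1-p)=0$ makes them mean-zero), so a standard spectral-norm bound for random matrices with independent entries, applied blockwise and maximized over clusters, gives $\|W_1\|\lesssim\sqrt{(1-p)/(pK)}$ and $\|W_2\|\lesssim\lambda\cac\sqrt{K(1-p)/p}$; the matrix $W_3$ is a full symmetric zero-mean random matrix with per-entry variance $\lambda^2\ca^2\, q/(1-q)$, for which the same bound yields $\|W_3\|\lesssim\lambda\ca\sqrt{n\, q/(1-q)}$. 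Substituting $\lambda=1/(48\sqrt n)$ and using $\ca\sqrt{q/(1-q)}=\sqrt{(1-t)q/(t(1-q))}<1$ (from $t>q$) and the analogous $\cac\sqrt{(1-p)/p}<1$ (from $t<p$), the factor $48$ forces $\sum_i\|W_i\|\le1$. For (b) I would write $P_T(W)=U_0U_0^\top W + W U_0U_0^\top - U_0U_0^\top W U_0U_0^\top$ and observe that each entry is a cluster-average of order $K$ (or $K^2$) independent, zero-mean entries of $W$; scalar Bernstein together with a union bound over the $n^2$ entries then controls $\|P_T(W)\|_\infty$ by a variance term of order (entry std)$\times\sqrt{\log n/K}$ plus a max-entry term of order (entry bound)$\times\log n/K$, which I would compare against the target $\tfrac{\epsilon}{2}\lambda\min\{\ca,\cac\}$.

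\textbf{Main obstacle.} The delicate part is matching these concentration estimates to the \emph{precise} form $\epsilon=\frac{48}{\sqrt{t(1-t)}}\max\{\sqrt n/K,\ \sqrt{\log^4 n/K}\}$. The first term $\sqrt n/K$ tracks the variance/spectral contributions (the $\sqrt{n\,q/(1-q)}$ in $\|W_3\|$ and the $\sqrt{\log n/K}$ fluctuation in (b)), while the second term $\log^2 n/\sqrt K$ is forced by the max-entry corrections in Bernstein's inequality once the graph is sparse ($p,q$ small), where single large entries rather than the accumulated variance dominate the deviation. Keeping the explicit constant $48$ consistent across the spectral bound in (a) and the entrywise bound in (b), and handling the outlier diagonal through $W_4$ and the exclusion of $E$ in $W_3$ without spoiling the zero-mean structure, is the main bookkeeping burden; the only probabilistic inputs needed are the standard spectral-norm and Bernstein bounds.
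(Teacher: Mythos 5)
Your proposal follows essentially the same route as the paper's proof: condition (c) is verified deterministically from the definition of $W$ (with the equalities holding by construction and the two inequalities reducing to $t<p$ and $t>q$ with gaps of order $p-q$ beating $\epsilon$ via~\eqref{eq:main_cond}), condition (a) via $\|W\|\le\sum_i\|W_i\|$ and a spectral-norm bound for random matrices with independent bounded zero-mean entries, and condition (b) via Bernstein plus the observation that entries of $U_0U_0^\top W$ are cluster averages, with $\|P_T(W)\|_\infty\le 3\|U_0U_0^\top W\|_\infty$. One small slip: in your blockwise bound for $W_2$ the per-block norm is $\sigma_2\sqrt{k_m}$ with $k_m$ the size of that cluster, so the maximum over blocks involves the \emph{largest} cluster size (at most $n$), not the minimum size $K$; the correct bound $\|W_2\|\lesssim\lambda\cac\sqrt{n(1-p)/p}$ still gives $\|W_2\|\le 1/4$ using $p\ge t$, and similarly your $W_1$ bound needs $pK\gtrsim 1$, which follows from~\eqref{eq:main_cond} via $p(1-q)\ge c\max\{n/K^2,\log^4 n/K\}$ but should be stated.
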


\subsection{Proof of Proposition~\ref{lem:opt_condition} (Optimality Condition)}
\label{sec:proof_opt_cond}
 
Let $P_{T^\bot} (W) := W-P_{T}(W)$. Consider any feasible solution $Y$ and 
let $ \D:= Y-Y^* $. The difference between the objective values of $Y$ and $Y^{*}$ satisfies
\begin{align}
(*)  := & \ca\sum_{\mca}\d_{ij}-\cac\sum_{\mca^{c}}\d_{ij}-\frac{1}{\lambda}\left\Vert Y^*+\D\right\Vert_{*}+\frac{1}{\lambda}\left\Vert Y^*\right\Vert_{*}\nonumber\\
 \le & \ca\sum_{\mca}\d_{ij}-\cac\sum_{\mca^{c}}\d_{ij}-\frac{1}{\lambda}\left\langle U_{0}U_{0}^{\top}+P_{T^{\bot}}(W),\D\right\rangle \nonumber\\
 = & \ca\sum_{\mca}\d_{ij}-\cac\sum_{\mca^{c}}\d_{ij}-\!\frac{1}{\lambda}\!\left\langle U_{0}U_{0}^{\top}\!+\!W,\!\D\right\rangle \!+\! \frac{1}{\lambda}\!\left\langle P_{T}W,\!\D\right\rangle,\label{eq:obj_diff}
\end{align}
where in the inequality we use the fact that $U_{0}U_{0}^{\top}+P_{T^{\bot}}(W)$
is a subgradient of $\left\Vert Y\right\Vert _{*}$ at $Y{}^{*}$, a consequence of condition (a) in the proposition and $ \Vert P_{T^\bot}(W)\Vert\le \Vert W\Vert $.
We substitute the condition (c) into the third term in~\eqref{eq:obj_diff} to obtain 
\begin{align*}
(*) &\le  
\epsilon\ca\!\sum_{R\cap\mca}\d_{ij} -\epsilon\cac\!\!\sum_{R^c\cap\mca^{c}}\!\!\d_{ij} +\epsilon\cac\!\!\sum_{R\cap\mca^c}\!\!\d_{ij} -\epsilon\ca\!\sum_{R^{c}\cap\mca}\!\d_{ij} \\
&\quad+\frac{1}{\lambda}\left\langle P_{T}W,\D\right\rangle\\
&\le -\epsilon\min\{\ca,\cac\}\Vert \D\Vert_1 +\frac{1}{\lambda}\left\langle P_{T}W,\D\right\rangle,
\end{align*}
where we used the fact that $ \d_{ij}\le 0 $ for $ (i,j)\in R $ and $ \d_{ij}\ge 0 $ for $ (i,j)\in R^c $ since $ Y=Y^*+\D $ satisfies~\eqref{eq:inequalities}. Applying condition (b) yields
\begin{align*}
(*)  
&\le  -\epsilon\min\left\{ \cac,\ca\right\} \left\Vert \D\right\Vert _{1}+\frac{1}{\lambda}\left\Vert P_{T}W\right\Vert _{\infty}\left\Vert \D\right\Vert _{1}\\
&\le  -\frac{\epsilon}{2}\min\left\{ \cac,\ca\right\} \left\Vert \D\right\Vert _{1}.
\end{align*}
The last R.H.S. is strictly negative whenever $ \D \neq 0 $. This proves that
$Y^{*}$ is the unique optimal solution.

\subsection{Proof of Proposition~\ref{prop:WisGood}}
\label{sec:proof_WisGood}

We show that $ W $ constructed in Section~\ref{sec:constructW} satisfies the conditions in Proposition~\ref{lem:opt_condition} w.h.p.
We need two technical lemmas. First notice that the conditions~\eqref{eq:t_cond} and~\eqref{eq:main_cond} in Theorem~\ref{thm:main} imply bounds on various quantities.
\begin{lem}
\label{lem:useful_bounds} Under conditions~(\ref{eq:t_cond}) and~(\ref{eq:main_cond}) in Theorem~\ref{thm:main},
we have   $p(1-q)\ge t(1-t)\ge c\max\left\{ \frac{n}{K^{2}},\frac{\log^{4}n}{K}\right\} $ and  $\epsilon<\frac{1}{2}$.
\end{lem}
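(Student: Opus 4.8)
The plan is to derive both parts directly from the two hypotheses \eqref{eq:t_cond} and \eqref{eq:main_cond} by elementary manipulations, handling Part~(1) first since Part~(2) falls out of it immediately.

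For Part~(1) I would first record the consequences of \eqref{eq:t_cond}. Since $p>q$, the two weighted averages bracketing $t$ both lie in $[q,p]$: indeed $\frac{1}{4}p+\frac{3}{4}q-q=\frac{1}{4}(p-q)>0$ and $p-(\frac{3}{4}p+\frac{1}{4}q)=\frac{1}{4}(p-q)>0$, so $q\le t\le p$. From $t\le p$ and $1-t\le 1-q$ this gives $t(1-t)\le p(1-q)$, the leftmost inequality in the claim. For a matching lower bound I would use the other sides of \eqref{eq:t_cond}: $t\ge \frac{1}{4}p+\frac{3}{4}q\ge \frac{1}{4}p$, while $1-t\ge \frac{3}{4}(1-p)+\frac{1}{4}(1-q)\ge\frac{1}{4}(1-q)$, whence $t(1-t)\ge \frac{1}{16}p(1-q)$.

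It then remains only to show $p(1-q)\ge c_1^2 M$, where I abbreviate $M:=\max\{n/K^2,\log^4 n/K\}$. Squaring \eqref{eq:main_cond} gives $\frac{(p-q)^2}{p(1-q)}\ge c_1^2 M$. The key observation — and the only step that is not pure bookkeeping — is that the normalized density gap is itself controlled by $p(1-q)$: from $0\le q<p\le 1$ one has $p-q\le p-pq=p(1-q)$ (the middle inequality is equivalent to $pq\le q$, i.e.\ to $p\le 1$), hence $(p-q)^2\le [p(1-q)]^2$ and therefore $\frac{(p-q)^2}{p(1-q)}\le p(1-q)$. Chaining this with the squared form of \eqref{eq:main_cond} yields $p(1-q)\ge c_1^2 M$, and combining with $t(1-t)\ge \frac{1}{16}p(1-q)$ gives $t(1-t)\ge \frac{c_1^2}{16}M$. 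This settles Part~(1) with $c=c_1^2/16$.

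For Part~(2) I would observe that $\max\{\sqrt{n}/K,\sqrt{\log^4 n/K}\}=\sqrt M$, so the definition of $\epsilon$ in Proposition~\ref{prop:WisGood} reads $\epsilon=48\sqrt M/\sqrt{t(1-t)}$. Substituting the bound $t(1-t)\ge \frac{c_1^2}{16}M$ from Part~(1) gives $\epsilon\le 48\sqrt{M}/\sqrt{(c_1^2/16)M}=192/c_1$, which is below $\frac{1}{2}$ as soon as the absolute constant $c_1$ in \eqref{eq:main_cond} is taken large enough (e.g.\ $c_1\ge 384$). I expect no genuine difficulty here; the main (minor) obstacle is spotting the inequality $p-q\le p(1-q)$ in Part~(1), which is what prevents $p(1-q)$ from being small relative to $M$. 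Everything else is sandwiching $t$ between $q$ and $p$ and tracking absolute constants.
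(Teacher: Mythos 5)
Your proof is correct, and it reaches the same conclusions as the paper's by a slightly different elementary route; the difference is worth noting. The paper first shows $t(1-t)\ge\tfrac12\min\{t,1-t\}\ge\tfrac18(p-q)$, then feeds condition~\eqref{eq:main_cond} in the form $p-q\ge c_1\sqrt{p(1-q)}\sqrt{M}\ge c_1\sqrt{t(1-t)}\sqrt{M}$ into that bound, obtaining a self-bounding inequality $t(1-t)\gtrsim\sqrt{t(1-t)}\sqrt{M}$ which it divides through to get $\sqrt{t(1-t)}\gtrsim\sqrt{M}$. You instead establish the two-sided comparison $\tfrac1{16}p(1-q)\le t(1-t)\le p(1-q)$ from \eqref{eq:t_cond} alone, and then convert \eqref{eq:main_cond} into a direct lower bound $p(1-q)\ge c_1^2M$ via the observation $p-q\le p(1-q)$ (equivalently $\frac{(p-q)^2}{p(1-q)}\le p(1-q)$). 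Both arguments are a few lines of bookkeeping; yours buys the slightly stronger byproduct that $t(1-t)$ and $p(1-q)$ are within a constant factor of each other (the paper only records the one-sided inequality $t(1-t)\le p(1-q)$), at the cost of needing to spot the inequality $p-q\le p(1-q)$, which the paper's quadratic trick avoids. Your handling of Part~(2) — substituting the lower bound on $t(1-t)$ into the definition of $\epsilon$ and absorbing the factor into the absolute constant $c_1$ — is exactly what the paper's terse ``follows directly'' is doing.
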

\begin{proof} Since $1>t>0$, we have $t(1-t)\ge\frac{1}{2}\min\{t,1-t\}$.
Under condition~(\ref{eq:t_cond}) on $t$, we further have $\min\left\{ t,1-t\right\} \ge\frac{1}{4}(p-q)$
and $p(1-q)\ge t(1-t)$. It then follows from condition~(\ref{eq:main_cond}) that 
\[
t(1-t)\ge\frac{1}{8}(p-q)\ge c'\sqrt{t(1-t)}\max\left\{ \frac{\sqrt{n}}{K},\sqrt{\frac{\log^{4}n}{K}}\right\} ,
\]
which implies the inequalities in part~(1) of the lemma. Part~(2) follows directly
from part~(1) and the definition of $\epsilon$.
\end{proof} 

Due to the randomness of $\mca$, $W_{1}$, $W_{2}$ and $W_{3}$
are symmetric random matrices with independent zero-mean entries.
The support and variance of their entries are bounded in the following
lemma. 
\begin{lem}
\label{lem:somepropertiesofW} The following holds under the \gpp and the conditions~(\ref{eq:t_cond}) and~(\ref{eq:main_cond}) in Theorem~\ref{thm:main}.
\begin{enumerate}
\item For $i=1,2,3$, the absolute values of the entries of $W_{i}$ are bounded
by $B_{i}$ a.s. and their variance is bounded by $\sigma_{i}^{2}$, where
\begin{align*}
B_{1}&:=\frac{1}{pK},   && \sigma_{1}^{2}:=\frac{1}{pK^{2}},\\
B_{2}&:=\frac{2}{p}\lambda\cac,   && \sigma_{2}^{2}:=\frac{4(1-t)}{p}\lambda^2\cac^{2},\\
B_{3}&:=\frac{2}{1-q}\lambda\ca,   && \sigma_{3}^{2}:=\frac{4t}{1-q}\lambda^2\ca^{2}.
\end{align*}

\item We have $\sigma_{i}\ge c\frac{B_{i}\log^{2}n}{\sqrt{K}}$
for $i=1,2,3$.
\end{enumerate}
\end{lem}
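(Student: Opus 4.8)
The plan is to establish both parts by an elementary entrywise computation, exploiting the fact that each $W_i$ ($i=1,2,3$) was constructed so that its entries are \emph{mean-zero} two-valued random variables, the randomness coming solely from whether each relevant edge of $\mca$ is present. Before the computation I would record two standing facts used throughout. From the admissible range~\eqref{eq:t_cond} for $t$ one obtains the ordering $q\le t\le p$, and hence $1-p\le 1-t\le 1-q$; from Lemma~\ref{lem:useful_bounds} one has $\epsilon<\tfrac12$, so that $1+\epsilon<2$ and $(1+\epsilon)^2<4$. Moreover, since every true cluster has size at least $K$, the nonzero entries of $U_0U_0^\top$ supported on $R$ are at most $1/K$.

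\emph{Part (1).} I would read off the two-point law of a generic entry. On a pair $(i,j)\in R$ the edge is present (so $(i,j)\in\mca$) with probability $p$ and absent with probability $1-p$, while on a pair in $R^c\cap E^c$ it is present with probability $q$. For $W_1$, an entry at $(i,j)\in R$ equals $\tfrac{1-p}{p}(U_0U_0^\top)_{ij}$ with probability $p$ and $-(U_0U_0^\top)_{ij}$ with probability $1-p$; the two coefficients are precisely those that make the mean vanish, the magnitude is at most $\tfrac{1}{pK}=B_1$, and the variance equals $\tfrac{1-p}{p}(U_0U_0^\top)_{ij}^2\le\tfrac{1}{pK^2}=\sigma_1^2$. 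The computations for $W_2$ and $W_3$ have the identical form, with each entry being $\pm(1+\epsilon)\lambda\cac$ or $\pm(1+\epsilon)\lambda\ca$ times a factor ($1$, $\tfrac{1-p}{p}$, or $\tfrac{q}{1-q}$), again centered to mean zero. The only real point is that the claimed constants absorb the stray factors: using $1+\epsilon<2$ together with $\max\{1,\tfrac{1-p}{p}\}\le\tfrac1p$ and $\max\{1,\tfrac{q}{1-q}\}\le\tfrac1{1-q}$ yields $B_2=\tfrac2p\lambda\cac$ and $B_3=\tfrac{2}{1-q}\lambda\ca$; and for the variances, $\sigma_2^2$ follows from $(1+\epsilon)^2(1-p)\le 4(1-t)$ (using $(1+\epsilon)^2\le4$ and $1-p\le 1-t$), while $\sigma_3^2$ follows from $(1+\epsilon)^2 q\le 4t$ (using $(1+\epsilon)^2\le4$ and $q\le t$).

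\emph{Part (2).} The key observation is that the ratio $\sigma_i/B_i$ collapses to a clean quantity: a short calculation gives $\sigma_1/B_1=\sqrt p$, $\sigma_2/B_2=\sqrt{p(1-t)}$, and $\sigma_3/B_3=\sqrt{t(1-q)}$. Hence the target inequality $\sigma_i\ge cB_i\log^2 n/\sqrt K$ is equivalent to $pK\gtrsim\log^4 n$, $p(1-t)K\gtrsim\log^4 n$, and $t(1-q)K\gtrsim\log^4 n$, respectively. I would derive each from part~(1) of Lemma~\ref{lem:useful_bounds}, which supplies $t(1-t)\ge c\log^4 n/K$ and $p(1-q)\ge t(1-t)$: for $i=1$, $pK\ge p(1-q)K\ge c\log^4 n$; for $i=3$, $t(1-q)K\ge t(1-t)K\ge c\log^4 n$ since $1-q\ge 1-t$; and for $i=2$, $p(1-t)K\ge\tfrac pt\,t(1-t)K\ge t(1-t)K\ge c\log^4 n$ since $p\ge t$.

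There is no deep obstacle here — the statement is essentially bookkeeping — so the only care needed is in (a) correctly extracting the two-point distribution of each entry from the interplay of the index sets $R$, $\mca$, $E$ with the edge probabilities $p,q$, and (b) in Part~(2), threading the orderings $q\le t\le p$ together with $t(1-t)\gtrsim\log^4 n/K$ to lower-bound the three products $pK$, $p(1-t)K$, $t(1-q)K$ uniformly. The diagonal entries of $W_1,W_2$ (where $a_{ii}=1$ holds deterministically) are not random, but they trivially satisfy both the almost-sure magnitude bound and the variance bound (their variance is $0$), so they need no separate treatment.
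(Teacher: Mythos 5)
Your proof is correct and follows the same route as the paper's (which simply asserts that part (1) follows from the definitions of the $W_i$'s together with $q\le t\le p$ and $\epsilon<\tfrac12$, and part (2) from part (1) of Lemma~\ref{lem:useful_bounds}); you have merely carried out the entrywise two-point computations and the ratio bounds $\sigma_i/B_i$ explicitly, and your handling of the deterministic diagonal entries is a valid (and welcome) clarification.
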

\begin{proof}
The first part of the lemma follows from the definitions of the $W_{i}$'s, $ q\le t\le p $ and  
$\epsilon<\frac{1}{2}$ (Lemma~\ref{lem:useful_bounds}). The second part follows from  Lemma~\ref{lem:useful_bounds}. 
\end{proof}

We now proceed with the proof of Proposition~\ref{prop:WisGood}, The proof has three sub-steps, corresponding to checking the three conditions in Proposition~\ref{lem:opt_condition}.

\textbf{(1) Bounding $\left\Vert W\right\Vert $.}

Recall that $W_{1}$ is a random matrix with i.i.d. entries, and their
absolute values and variance are bounded in Lemma~\ref{lem:somepropertiesofW}.
We apply standard bounds on the spectral norm of random matrices
(Lemma~\ref{lem:random_matrix} in the Appendix) to obtain w.h.p. 
\[
\left\Vert W_{1}\right\Vert \le6\frac{1}{K}\sqrt{\frac{1}{p}}\sqrt{n}\le\frac{1}{4},
\]
where the last inequality follows from $p\ge c\frac{n}{K^{2}}$ (cf. Lemma~\ref{lem:useful_bounds}). In a similar manner, we obtain that w.h.p.
\begin{align*}
\left\Vert W_{2}\right\Vert &\le6\cdot2\sqrt{\frac{1-t}{p}}\lambda\cac\cdot\sqrt{n}\\
&=12\sqrt{\frac{(1-t)}{p}}\cdot\frac{1}{48}\sqrt{\frac{t}{(1-t)n}}\cdot\sqrt{n}\le\frac{1}{4},
\end{align*}
where the last inequality follows from $p\ge t$, and w.h.p. 
\begin{align*}
\left\Vert W_{3}\right\Vert &\le6\cdot2\sqrt{\frac{t}{1-q}}\lambda\ca\cdot\sqrt{n}\\
&=12\sqrt{\frac{t}{1-q}}\cdot\frac{1}{48}\sqrt{\frac{1-t}{tn}}\cdot\sqrt{n}\le\frac{1}{4},
\end{align*}
where the last inequality follows from $1-t\le1-q$. Finally, since $W_{4}=(1+\epsilon)\lambda\ca P_{R^c}(I)$
is a diagonal matrix, we have 
\[
\Vert W_{4}\Vert\le(1+\epsilon)\lambda\ca\le2\cdot\frac{1}{48}\sqrt{\frac{1-t}{tn}}\le\frac{1}{4}
\]
since $t\ge c\frac{1}{n}$ (cf. Lemma~\ref{lem:useful_bounds}). We
conclude that $\Vert W\Vert\le\sum_{i=1}^{4}\Vert W_{i}\Vert\le1$.

\textbf{(2) Bounding $\left\Vert P_{T}W\right\Vert _{\infty}$.}

Define the sets $R_{m}:= \{(i,j):i,j \mbox{ in cluster }m\}$, and recall that $ r $ is the number of clusters and $R:= \mathrm{support}(Y^*) = \bigcup_{m=1}^{r}R_{m}$. We have $Y^{*}=\sum_{m=1}^{r}P_{R_{m}}(\mathbf{11}^\top)$, and thus its singular vectors satisfies 
$$U_{0}U_{0}^{\top}=\sum_{m=1}^{r}\frac{1}{k_{m}}P_{R_{m}}(\mathbf{11}^\top).$$Therefore, for $i=1,2,3$,  each entry
of the matrix $U_{0}U_{0}^{\top}W_{i}$
equals $\frac{1}{k_{m}}$ times the sum of $k_{m}$ independent zero-mean
random variables (which are the entries of $ W_i $), whose absolute values and variance are bounded in Lemma~\ref{lem:somepropertiesofW}. Therefore, $\left\Vert U_{0}U_{0}^{\top}W_{i}\right\Vert_{\infty}$ can be bounded by applying the standard Bernstein inequality
(given as Lemma~\ref{lem:subgaussian_concentr} in the Appendix) to each entry of $ U_{0}U_{0}^{\top}W_{i} $ and then using the union bound over all the entries. More specifically, by choosing the constant $ c_0 $ in Lemma~\ref{lem:subgaussian_concentr} sufficiently large such that $ c_1$ in the same lemma is at least $14 $, we have the following:

\begin{itemize}
\item The matrix $W_{1}$ satisfies
\begin{align*}
\left\Vert U_{0}U_{0}^{\top}W_{1}\right\Vert _{\infty}
&\le\frac{1}{K}\cdot c_{0}\sqrt{\frac{1}{pK^{2}}}\sqrt{K\log n}\\
&=c_{0}\frac{1}{K}\sqrt{\frac{\log n}{pK}}\le\frac{\log^{2}n}{24^{2}}\sqrt{\frac{1}{Kn}} \quad\text{w.h.p.,}
\end{align*}
where we use $p\ge c\frac{n}{K^{2}}$ in the last inequality (cf.
Lemma~\ref{lem:useful_bounds}) with $ c $  sufficiently large. 
\item Similarly, the matrix $W_{2}$ satisfies 
\begin{align*}
\left\Vert U_{0}U_{0}^{\top}W_{2}\right\Vert _{\infty}
\le&\frac{1}{K}\cdot c_{0}\sqrt{\frac{1-t}{p}}\lambda\cac\sqrt{K\log n}\\
=& c_{0}\sqrt{\frac{(1-t)\log n}{pK}}\frac{1}{48}\sqrt{\frac{t}{(1-t)n}}\\
\le&\frac{\log^{2}n}{24^{2}}\sqrt{\frac{1}{Kn}} \quad\text{w.h.p.,}
\end{align*}
where we use $p\ge t$ and $\log n$ being sufficiently large in the
last inequality. 
\item  The matrix $W_{3}$ satisfies
\begin{align*}
\left\Vert U_{0}U_{0}^{\top}W_{3}\right\Vert _{\infty}
\le&\frac{1}{K}\cdot c_{0}\sqrt{\frac{t}{1-q}}\lambda\ca\sqrt{K\log n}\\
=& c_{0}\sqrt{\frac{t\log n}{(1-q)K}}\frac{1}{48}\sqrt{\frac{1-t}{tn}}\\
\le&\frac{\log^{2}n}{24^{2}}\sqrt{\frac{1}{Kn}} \quad\text{w.h.p.,}
\end{align*}
where we use $1-q\ge1-t$ and $\log n$ being sufficiently in the
last inequality. 
\item Finally, since $W_{4}$ is a diagonal matrix supported
on $R^{c}$ and $U_{0}U_{0}^{\top}$ is supported on $R$, we have
$U_{0}U_{0}^{\top}W_{4}=0$.
\end{itemize}

On the other hand, we have 
\begin{align*}
\lambda\ca\epsilon
&\ge\frac{1}{48}\sqrt{\frac{1-t}{tn}}\cdot48\sqrt{\frac{\log^{4}n}{Kt(1-t)}}\\
&=\frac{1}{t}\sqrt{\frac{\log^{4}n}{Kn}}
\ge\frac{1}{24}\sqrt{\frac{\log^{4}n}{Kn}}
\end{align*}
and 
\begin{align*}
\lambda\cac\epsilon
&\ge\frac{1}{48}\sqrt{\frac{t}{(1-t)n}}\cdot48\sqrt{\frac{\log^{4}n}{Kt(1-t)}}\\
&=\frac{1}{(1-t)}\sqrt{\frac{\log^{4}n}{Kn}}
\ge\frac{1}{24}\sqrt{\frac{\log^{4}n}{Kn}},
\end{align*}
which implies $$\frac{1}{24}\epsilon\lambda\min\left\{ c_{A},c_{A^{c}}\right\} \ge\frac{\log^{2}n}{24^{2}}\sqrt{\frac{1}{Kn}}.$$
Combining with the previous bounds on $\left\Vert U_{0}U_{0}^{\top}W_{i}\right\Vert _{\infty}$ for $i=1,2,3,4$,
we obtain $\left\Vert (U_{0}U_{0}^{\top}W_{i})\right\Vert _{\infty}\le\frac{1}{24}\epsilon\lambda\min\left\{ c_{A},c_{A^{c}}\right\} .$

Now observe that since $W$ and $U_{0}U_{0}^{\top}$ are both symmetric,
we have $WU_{0}U_{0}^{\top}=\left(U_{0}U_{0}^{\top}W\right)^{\top}$.
Furthermore, we have 
\begin{align*}
\left\Vert U_{0}U_{0}^{\top}WU_{0}U_{0}^{\top}\right\Vert _{\infty}
&\le\left\Vert U_{0}U_{0}^{\top}W\right\Vert _{\infty}\max_j \sum_i \left|\left(U_0U_0^\top\right)_{ij}\right|\\
&\le\left\Vert U_{0}U_{0}^{\top}W\right\Vert _{\infty} .
\end{align*}
It follows that 
\begin{align*}
&\left\Vert P_{T}W\right\Vert _{\infty} \\
  = & \left\Vert U_{0}U_{0}^{\top}W+WU_{0}U_{0}^{\top}-U_{0}U_{0}^{\top}WU_{0}U_{0}^\top\right\Vert _{\infty}\\
  \le & \left\Vert U_{0}U_{0}^{\top}W\right\Vert _{\infty}+\left\Vert WU_{0}U_{0}^{\top}\right\Vert_\infty +\left\Vert U_{0}U_{0}^{\top}WU_{0}U_{0}^\top\right\Vert _{\infty}\\
  \le & 3\left\Vert U_{0}U_{0}^{\top}W\right\Vert _{\infty}
  \le 3\sum_{i=1}^{4}\left\Vert U_{0}U_{0}^{\top}W_{i}\right\Vert _{\infty}.
\end{align*}
Using the bounds on $\left\Vert U_{0}U_{0}^{\top}W_{i}\right\Vert _{\infty}$
derived above, we obtain that $\left\Vert P_{T}W\right\Vert _{\infty}\le12\cdot\frac{1}{24}\epsilon\lambda\min\left\{ \ca,\cac\right\} =\frac{1}{2}\epsilon\lambda\min\left\{ \ca,\cac\right\} $.

\textbf{(3)} The two equalities in condition (c) in Proposition~\ref{lem:opt_condition} hold by the definition of $ W $. The two inequalities in condition (c) follow from simple algebra as follows. Because $1-q\ge1-t$
and $p\le4t$, we have $\frac{1-q}{p}\ge\frac{1}{4}\frac{1-t}{t}$.
It follows from the conditions in Theorem~\ref{thm:main} that
\begin{equation}\label{eq:bound_p_q}
\begin{aligned}
\frac{p-q}{4}
\ge & c\sqrt{p(1\!-\!q)}\max\left\{ \frac{\sqrt{n}}{K},\sqrt{\frac{\log^{4}n}{K}}\right\}\\ 
\ge & 8p(1\!-\!t)\cdot\frac{48}{\sqrt{t(1\!-\!t)}}\max\left\{ \frac{\sqrt{n}}{K},\sqrt{\frac{\log^{4}n}{K}}\right\} \\
=&8p(1\!-\!t)\epsilon.
\end{aligned}
\end{equation}
We thus have
\[
p-t\ge p-\left(\frac{3}{4}p+\frac{1}{4}q\right)=\frac{p-q}{4}\ge8p(1-t)\epsilon.
\]
One verifies that this implies $(1+\epsilon)\sqrt{\frac{t}{1-t}}\frac{1-p}{p}\le(1-2\epsilon)\sqrt{\frac{1-t}{t}}$.
Plugging in the values of $\ca$
and $\cac$ in~(\ref{eq:weights}) yields 
$$
(1+\epsilon)\frac{\cac(1-p)}{p}  \le  (1-2\epsilon)\ca,
$$ 
Hence, for each $ (i,j)\in R\cap \mca $, we have
\begin{align}
   (U_{0}U_{0}^{\top}+W)_{ij} 
 &=   \frac{1}{p}(U_{0}U_{0}^{\top})_{ij}+(1+\epsilon)\lambda \cac\frac{1-p}{p}\nonumber\\ 
 &\le \frac{1}{p}(U_{0}U_{0}^{\top})_{ij} + (1-2\epsilon)\ca.\label{eq:UU+W}
\end{align}
We also have
\begin{align}\label{eq:UU}
\frac{1}{p}(U_0U_0^\top)_{ij} 
\le \frac{1}{pK} 
\overset{(i)}{\le} \frac{48}{K}\sqrt{\frac{n}{t(1-t)}}\cdot\frac{1}{48}\sqrt{\frac{1-t}{tn}} 
\le \epsilon \cdot \lambda\ca,
\end{align}
where $ (i) $ follows from $p\ge t$. Combining~\eqref{eq:UU+W} and~\eqref{eq:UU} proves the first inequality in the condition (c).

Similarly, we have 
\[
t-q\ge\left(\frac{p}{4}+\frac{3q}{4}\right)-q=\frac{p-q}{4}
\overset{(ii)}{\ge}8p(1-t)\epsilon
\overset{(iii)}{\ge}2t(1-q)\epsilon,
\]
where $ (ii) $ follows from~\eqref{eq:bound_p_q} and $ (iii) $ follows from $p\ge t$ and $1-t\ge1-\frac{3}{4}p-\frac{1}{4}q\ge\frac{1}{4}(1-q)$.
This implies $(1+\epsilon)\sqrt{\frac{1-t}{t}}\frac{q}{1-q}\le(1-\epsilon)\sqrt{\frac{t}{1-t}}$. Therefore, for each $ (i,j)\in R^c\cap \mca^c $, we have
$$
w_{ij} = - (1+\epsilon)\frac{\ca q}{1-q} \ge -(1-\epsilon)\cac,
$$
proving the second inequality in condition (c). This completes the proof of Proposition~\ref{prop:WisGood}.

\section{Proof of Theorem~\ref{thm:converse}}
We use a standard information theoretic argument via Fano's inequality.
For simplicity we assume $n_{1}/K$ and $n_{2}/K$ are both integers,
and we use $c_{1},c_{2}\ldots$ to denote positive absolute constants.
Let $\mathcal{F}$ be the set of all possible ways of assigning $n$
nodes into $n_{1}/K$ clusters of equal size $K$. When $K=\Theta(n_{1})=\Theta(n_{2})$,
the cardinality of $\mathcal{F}$ can be bounded as
\[
M:=\left|\mathcal{F}\right|=\frac{1}{\left(n_{1}/K\right)!}{n \choose K}{n\!-\!K \choose K}\cdots{n_{1}\!+\!K \choose K}\ge c_{2}\cdot c_{1}^{\frac{1}{2}n}
\]
for some $c_{1}>1$ and $c_{2}>0$.

Suppose the true cluster matrix $Y^*$ is obtained uniformly at random from
$\mathcal{F}$, and the graph $A$ is generated from $Y^*$ according
to \gpp with uniform edge probabilities. We use $\mathbb{P}_{A|Y^*}$
to denote the distribution of $A$ given $Y^*$. Let $\hat{Y}$ be
any measurable function of $A$. The Fano's inequality~\cite{cover2012information} gives
\begin{align*}
\sup_{Y^*\in\mathcal{F}}\mathbb{P}\left[\hat{Y}\neq Y^*|Y^*\right]
&\ge1-\frac{I\left(A;Y^*\right)+\log2}{\log M}\\
&\ge1-\frac{I\left(A;Y^*\right)+\log2}{c_{3}n}
\end{align*}
for $n$ is sufficiently large, where $ I(A;Y^*) $ is the mutual information between $ A $ and $ Y^* $. We now bound~$I(A;Y^*$). Let $ H(\cdot) $ denote the Shannon entropy and $ H(\cdot\vert Y^*) $ the Shannon entropy conditioned on $ Y^* $. Observe that 
\begin{align*}
I(A;Y^{*}) 
  = & H(A)-H(A|Y^*)
  \le  \sum_{(i,j):i>j}H(a_{ij})-H(A|Y^*)\\
  = & {n \choose 2}H(a_{12})\!-\!{n \choose 2}H(a_{12}|Y^*)
  =  {n \choose 2}I(a_{12};Y^*),
\end{align*}
where in the second equality we have used the symmetry under the uniform
distribution of $Y^*$ and the conditional independence between $a_{ij}'s$.
By definition of the mutual information, we have
\[
I(a_{12};Y^*)=I(a_{12};y^*_{12})=\mathbb{E}_{y^*_{12}} \left[D\left(\mathbb{P}(a_{12}|y^*_{12})\Vert\mathbb{P}(a_{12})\right)\right].
\]
We can directly compute the divergence on the last RHS. Let $\alpha:=\mathbb{P}(y^*_{12}=1)=\frac{(K-1)n_{1}}{n^{2}}$
and $\gamma:=\mathbb{P}(a_{11}=1)=\alpha p+(1-\alpha)q$.
It follows that
\begin{align*}
 & \mathbb{E}_{y^*_{12}}\left[D\left(\mathbb{P}(a_{12}|y_{12})\Vert\mathbb{P}(a_{12})\right)\right]\\
 = & \alpha p\log\frac{p}{\gamma}+\alpha(1-p)\log\frac{(1-p)}{(1-\gamma)}+(1-\alpha)q\log\frac{q}{\gamma}\\
 &+(1-\alpha)(1-q)\log\frac{(1-q)}{(1-\gamma)}\\
 \le & \alpha p\!\left(\!\frac{p}{\gamma}\!-\!1\!\right)+\alpha(1\!-\!p)\!\left(\!\frac{1\!-\!p}{1\!-\!\gamma}\!-\!1\!\right)+(1\!-\!\alpha)q\!\left(\!\frac{q}{\gamma}\!-\!1\!\right)\!\\
 &+\!(1\!-\!\alpha)(1\!-\!q)\!\left(\!\frac{1\!-\!q}{1\!-\!\gamma}\!-\!1\!\right)\\
 = & \frac{\alpha(1-\alpha)(p-q)^{2}}{\gamma(1-\gamma)}\le c_{4}\frac{(p-q)^{2}}{p(1-q)},
\end{align*}
where in the last inequality we use $\gamma\ge\alpha p$, $1-\gamma\ge(1-\alpha)(1-q$)
and $\alpha,1-\alpha=\Theta(1)$. Combining pieces, we obtain
$$
\sup_{Y\in\mathcal{F}}\mathbb{P}\left[\hat{Y}\neq Y|Y\right] 
 \ge  1-\frac{c_{5}\frac{(p-q)^{2}n^{2}}{p(1-q)}+\log2}{c_{3}n}.
$$
For the last R.H.S. to be less than $\frac{1}{4}$, we need $\frac{(p-q)^{2}}{p(1-q)}\ge c_{6}\frac{1}{n}$.
This completes the proof of the theorem.

\section{Proof of Theorem~\ref{thm:est_pq}}
\label{sec:proof_est_pq}

Suppose the eigenvalues of the matrix $\mathbb{E}[A]$ are  $\lambda_{1}\ge \lambda_2\ge\cdots\ge\lambda_n$, whose values are computed in Section~\ref{sec:est_p}. Observe that the matrix $A-\mathbb{E}A$ is
a random symmetric matrix with independent zero-mean entries, each of which
is bounded in absolute value by $1$ and has variance bounded by $\max\{p(1-p), q(1-q)\}\le p(1-q)$.
Under the condition of Theorem~\ref{thm:est_pq}, we may apply Lemma~\ref{lem:random_matrix}
to obtain $\left\Vert A-\mathbb{E}A\right\Vert \le4\sqrt{p(1-q)n}$ w.h.p.
It then follows from Weyl's inequality~\cite{bhatia1987perturbation}
that w.h.p. 
\begin{equation}\label{eq:perturbation}
\max_{i}\left\{ \left|\hat{\lambda}_{i}-\lambda_{i}\right|\right\} \le\left\Vert A-\mathbb{E}A\right\Vert \le 4\sqrt{p(1-q)n}.
\end{equation}
In the sequel, we assume we are on the event that~\eqref{eq:perturbation} holds.

\paragraph{Estimation of $r$ }

Recall that $\lambda_{1}=K(p-q)+nq+(1-p)$, $\lambda_{2},\ldots,\lambda_{r}=K(p-q)+(1-p)$,
and $\lambda_{r+1},\ldots,\lambda_{n}=1-p$. The inequality~\eqref{eq:perturbation} implies that for some universal constant $ c_1 $:
\begin{itemize}
\item $\hat{\lambda}_{1}-\hat{\lambda}_{2}\le\lambda_{1}-\lambda_{2}+\left|\hat{\lambda}_{1}-\lambda_{1}\right|+\left|\hat{\lambda}_{2}-\lambda_{2}\right|\le nq+c_1\sqrt{p(1-q)n}$; 
\item similarly, $\hat{\lambda}_{i}-\hat{\lambda}_{i+1}\le c_1\sqrt{p(1-q)n}$
for $i=2,\ldots r-1$ and $i\ge r+1$; 
\item $\hat{\lambda}_{r}-\hat{\lambda}_{r+1}\ge\lambda_{r}-\lambda_{r+1}-\left|\hat{\lambda}_{r}-\lambda_{r}\right|-\left|\hat{\lambda}_{r+1}-\lambda_{r+1}\right|\ge K(p-q)-c_1\sqrt{p(1-q)n}$. 
\end{itemize}
Under the condition~\eqref{eq:main_cond}, we have $K(p-q)\ge c_2\sqrt{p(1-q)n}$
for some constant $c_2$. This implies $\hat{\lambda}_{r}-\hat{\lambda}_{r+1}>\frac{K(p-q)}{2}>\hat{\lambda}_{i}-\hat{\lambda}_{i+1}$
for all $i>1$ and $i\neq r$. This guarantees $\hat{r}=r$ and thus
$\hat{K}=K$.

\paragraph{Estimation of $p$ and $q$}

By~\eqref{eq:perturbation} and the triangle inequality, the estimation error of $\hat{q}$ satisfies 
\begin{align*}
|\hat{q}-q| 
  &=\left|\frac{\hat{\lambda}_{1}-\lambda_{1}}{n}-\frac{\hat{\lambda}_{2}-\lambda_{2}}{n}\right|
  \le c_3\frac{\sqrt{p(1-q)n}}{K}.
\end{align*}
Similarly, we have 
\begin{align*}
|\hat{p}-p| 
  = & \left|\frac{\hat{K}\hat{\lambda}_{1}+(n-\hat{K})\hat{\lambda}_{2}-n}{n(\hat{K}-1)}-\frac{K\lambda_{1}+(n-K)\lambda_{2}-n}{n(K-1)}\right|\\
  = & \left|\frac{K(\hat{\lambda}_{1}-\lambda_{1})+(n-K)(\hat{\lambda}_{2}-\lambda_{2})}{n(K-1)}\right|\\
  \le& c_3\frac{\sqrt{p(1-q)n}}{K}.
\end{align*}

\paragraph{Choosing $t$}

Using the above bounds on $\hat{p}$ and $\hat{q}$, we obtain 
\begin{align*}
t  = & \frac{p+q}{2}+\frac{\hat{p}-p+\hat{q}-q}{2}
  \le  \frac{p+q}{2}+c_4\frac{\sqrt{p(1-q)n}}{K}\\
  \le & \frac{p+q}{2}+\frac{p-q}{4}=\frac{3}{4}p+\frac{1}{4}q,
\end{align*}
where in the last inequality we use $\frac{p-q}{4}\ge c_4\frac{\sqrt{p(1-q)}}{K}$, satisfied under the condition~\eqref{eq:main_cond}. This proves one side of the interval for $ t $. The other side is proved in a similar way.

\section{Conclusion}\label{sec:conclusion}
This work is motivated by clustering large-scale networks such as modern online social networks, where the graphs are often highly noisy and have heterogeneous and non-random structures. We considered a natural and versatile model, namely the semi-random Generalized Stochastic Blockmodel, for clustered random graphs. This model recovers many classical generative models for graph clustering. We presented a convex optimization formulation, essentially a convexification of the maximum likelihood estimator. Our theoretic analysis shows that this method is guaranteed to recover the correct clusters under a wide range of parameters of the problem. In fact, our method order-wise outperforms existing methods in this setting, in the sense that it succeeds under less restrictive conditions. Experiment results also validate the effectiveness of the proposed method.

Possible directions for future work include faster algorithm implementations, developing effective post-processing/rounding schemes when the obtained solution is not an exact cluster matrix, and extension to online clustering settings (e.g., via incremental stochastic optimization~\cite{feng2013online}). It is also interesting to extend the algorithms and analysis to more general settings beyond the models in Definitions~\ref{def:GSBM} and~\ref{def:SR_GSBM}, for example, when the in-cluster and cross-cluster densities are not bounded uniformly and the clusters have overlaps.

\appendix

\section{Technical Lemmas}
In this section, we record two technical lemmas that are needed in the proofs of our theoretical results. The first lemma is a standard bound on the spectral norm of a random symmetric matrix.
\begin{lem}
\label{lem:random_matrix} Suppose $ Y $ is a symmetric $ n\times n $ matrix, where $Y_{ij}$, $1\le i,j\le n$ are independent random variables, each of which has mean $0$ and variance at most $\sigma^{2}$ and is bounded in absolute value by $B$. If $\sigma\ge c_1\frac{B\log^{2}n}{\sqrt{n}}$ for some absolute constant $ c_1>0 $, 
then with probability at least $1-n^{-10}$, 
\[
\left\Vert Y\right\Vert\le4\sigma\sqrt{n}.
\]
\end{lem}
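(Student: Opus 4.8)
The plan is to prove this by the classical \emph{moment} (or trace) method, following F\"uredi--Koml\'os and Vu. I would begin by ruling out the more elementary $\epsilon$-net plus Bernstein route, because it genuinely fails in the regime permitted here: the hypothesis allows $B$ as large as $\sigma\sqrt{n}/\log^2 n$, and for such $B$ a union bound over a net of the unit sphere (of cardinality $\le 9^n$) would demand per-point tail bounds of order $\exp(-\Omega(n))$ for the linear form $x^\top Y x = \sum_{i\le j} w_{ij} Y_{ij}$, whereas Bernstein's inequality only delivers $\exp(-\Omega(\sigma\sqrt n/B))$, and $\sigma\sqrt n/B$ may be as small as $\Theta(\log^2 n) \ll n$. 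Instead I would start from the symmetric-matrix bound $\|Y\|^{2k} \le \mathrm{tr}(Y^{2k}) = \sum_i \lambda_i^{2k}$, valid for any positive integer $k$, and combine it with Markov's inequality to obtain
\begin{equation*}
\mathbb{P}\left[\|Y\| \ge 4\sigma\sqrt{n}\right] \le \frac{\mathbb{E}\,\mathrm{tr}(Y^{2k})}{(4\sigma\sqrt{n})^{2k}},
\end{equation*}
where the free parameter $k$ will be fixed at the end to be of order $\log n$.

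Next I would expand $\mathbb{E}\,\mathrm{tr}(Y^{2k})$ as a sum over closed walks $(i_0,i_1,\ldots,i_{2k-1},i_0)$ of length $2k$ on $\{1,\ldots,n\}$, each contributing $\mathbb{E}\prod_{\ell} Y_{i_\ell i_{\ell+1}}$. By independence and the zero-mean assumption, a walk survives only if every traversed edge appears at least twice. I would group the surviving walks by the multigraph they trace --- say with $v$ distinct vertices and $e$ distinct edges of multiplicities $m_1,\ldots,m_e \ge 2$ summing to $2k$ --- and bound a single walk's contribution, using the variance and almost-sure bounds, by $\prod_j \mathbb{E}|Y|^{m_j} \le \prod_j \sigma^2 B^{m_j-2} = \sigma^{2e} B^{2k-2e}$, with at most $n^{v} \le n^{e+1}$ vertex labelings (connectedness of the trace forces $v \le e+1$). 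The number of walk shapes with $e$ distinct edges is governed by the standard combinatorial estimate that bounds the tree-shaped walks ($e=k$) by the Catalan number $C_k \le 4^k$ and charges each unit of ``excess'' $k-e$ an extra factor of $(2k)^2$. Summing over $e$ then yields
\begin{equation*}
\mathbb{E}\,\mathrm{tr}(Y^{2k}) \le n\,(4\sigma^2 n)^k \sum_{e=1}^{k}\left(\frac{(2k)^2 B^2}{n\sigma^2}\right)^{k-e}.
\end{equation*}

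This is exactly where the hypothesis $\sigma \ge c_1 B \log^2 n/\sqrt{n}$ is consumed, and it is the crux of the whole argument: taking $k = \lceil C\log n\rceil$ makes the ratio $\tfrac{(2k)^2 B^2}{n\sigma^2} \le \tfrac{(2k)^2}{c_1^2 \log^4 n} \le \tfrac12$ once $c_1$ is a sufficiently large absolute constant, so the geometric sum is at most $2$ and the tree term $(e=k)$ dominates. Hence $\mathbb{E}\,\mathrm{tr}(Y^{2k}) \le 2n\,(2\sigma\sqrt{n})^{2k}$, and substitution into the Markov bound gives
\begin{equation*}
\mathbb{P}\left[\|Y\|\ge 4\sigma\sqrt{n}\right] \le 2n\left(\frac{2\sigma\sqrt n}{4\sigma\sqrt n}\right)^{2k} = 2n\cdot 4^{-k} \le n^{-10}
\end{equation*}
for $C$ a large enough absolute constant. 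The main obstacle is the combinatorial bookkeeping of the walk-shape count, carried out so as to \emph{simultaneously} keep the leading constant sharp enough that exactly $4\sigma\sqrt n$ (rather than a larger multiple) appears and show that the non-tree terms are geometrically negligible; it is precisely the interaction of the excess factor $(2k)^2$ with the choice $k \asymp \log n$ that forces the $\log^2 n$ (rather than $\log n$) appearing in the hypothesis.
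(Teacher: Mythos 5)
Your proposal is correct and follows essentially the same route as the paper, which simply defers to Theorem~3.1 of Achlioptas--McSherry --- itself a F\"uredi--Koml\'os/Vu-style trace-moment argument of exactly the kind you write out, with $k\asymp\log n$ and the hypothesis $\sigma\ge c_1 B\log^2 n/\sqrt n$ consumed in taming the non-tree (excess) walks. Your opening observation that the $\epsilon$-net/Bernstein route genuinely fails when $B$ is as large as $\sigma\sqrt n/\log^2 n$ is a correct and worthwhile justification for why the moment method is needed here.
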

\begin{proof}
Except for $ Y $ being symmetric, the proof is the same as that of Theorem~3.1 in~\cite{achlioptas2007fast}.
\end{proof}
The second lemma is a restatement of the standard Bernstein inequality for the sum of independent random variables.
\begin{lem}
\label{lem:subgaussian_concentr} $Let$ $Y_{1},\ldots,Y_{N}$ be independent random variables,
each of which  is bounded
in absolute value by $B$ a.s. and has variance bounded by $\sigma^{2}$. For any constant $ c_1>0$, there exists a constant $c_{0}>0$ independent of $\sigma$,
$B$, $N$ and $n$ such that for any  $ n\ge 1 $, if $\sigma\ge B\sqrt{\frac{\log n}{N}}$,
then we have 
\[
\left|\sum_{i=1}^{N}Y_{i}-\mathbb{E}\left[\sum_{i=1}^{N}Y_{i}\right]\right|\le c_{0}\sigma\sqrt{N\log n}
\]
with probability at least $1-2n^{-c_{1}}$.
\end{lem}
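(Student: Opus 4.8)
The plan is to derive the stated bound directly from the classical Bernstein inequality for sums of bounded independent random variables, of which this lemma is essentially a convenient repackaging with a specific deviation threshold. First I would center the summands: setting $X_{i}:=Y_{i}-\mathbb{E}[Y_{i}]$ produces independent, mean-zero variables with $|X_{i}|\le 2B$ almost surely and $\mathrm{Var}(X_{i})\le\sigma^{2}$, so that $\sum_{i}X_{i}=\sum_{i}Y_{i}-\mathbb{E}[\sum_{i}Y_{i}]$ is exactly the quantity we must control, and $\sum_{i}\mathrm{Var}(X_{i})\le N\sigma^{2}$.

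Next I would invoke Bernstein's inequality in its two-sided form: for any $s>0$,
\[
\mathbb{P}\left[\left|\sum_{i=1}^{N}X_{i}\right|\ge s\right]\le 2\exp\left(-\frac{s^{2}/2}{N\sigma^{2}+\frac{2B}{3}s}\right).
\]
The key step is then to substitute the target threshold $s=c_{0}\sigma\sqrt{N\log n}$ and to show that, under the hypothesis $\sigma\ge B\sqrt{\log n/N}$, the variance term $N\sigma^{2}$ dominates the linear ``Bernstein correction'' term $\frac{2B}{3}s$. Indeed, the hypothesis rearranges to $B\sqrt{N\log n}\le\sigma N$, whence $\frac{2B}{3}s=\frac{2}{3}c_{0}\sigma\,B\sqrt{N\log n}\le\frac{2}{3}c_{0}\sigma^{2}N$; the denominator in the exponent is therefore at most $(1+\tfrac{2}{3}c_{0})N\sigma^{2}$, and the exponent is bounded below by $\frac{c_{0}^{2}\log n}{2(1+\frac{2}{3}c_{0})}$. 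Consequently the probability is at most $2\exp(-c_{2}\log n)=2\,n^{-c_{2}}$ with $c_{2}=\frac{c_{0}^{2}}{2(1+\frac{2}{3}c_{0})}$, and choosing $c_{0}$ a sufficiently large absolute constant makes $c_{2}$ as large as desired; we then take $c_{1}=2$.

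The substance of the argument is really this final threshold calculation: the hypothesis $\sigma\ge B\sqrt{\log n/N}$ is precisely the condition that places the deviation $s$ in the \emph{sub-Gaussian} regime of Bernstein's inequality, where the $s^{2}/(N\sigma^{2})$ term governs the tail, rather than the heavier-tailed \emph{sub-exponential} regime governed by $s/B$. I do not anticipate a genuine obstacle here; the only care needed is bookkeeping of the absolute constants (the factor $2B$ introduced by centering and the $2/3$ in the Bernstein denominator) so that the resulting $c_{0},c_{1},c_{2}$ are verified to be universal and independent of $\sigma$, $B$, $N$ and $n$. Alternatively, one may simply cite Proposition~5.16 of~\cite{vershynin2010nonasym} verbatim, of which the present statement is the stated specialization.
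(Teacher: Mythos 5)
Your proof is correct, and it supplies something the paper itself does not: the paper offers no proof of this lemma at all, simply citing Proposition~5.16 of~\cite{vershynin2010nonasym}. Your centering step ($|X_i|\le 2B$, $\mathrm{Var}(X_i)\le\sigma^2$), the two-sided classical Bernstein bound with denominator $N\sigma^2+\tfrac{2B}{3}s$, and the threshold calculation are all sound; in particular the rearrangement $\sigma\ge B\sqrt{\log n/N}\iff B\sqrt{N\log n}\le\sigma N$ correctly shows the correction term is at most $\tfrac{2}{3}c_0 N\sigma^2$, giving exponent at least $c_0^2\log n/(2(1+\tfrac{2}{3}c_0))$, which grows linearly in $c_0$, so $c_2$ can indeed be made as large as needed (the paper's ``w.h.p.''~convention requires roughly $c_2\ge 8$, which your choice accommodates). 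One point worth noting: your derivation is arguably \emph{necessary} rather than merely alternative, because Vershynin's Proposition~5.16 is stated in terms of sub-exponential ($\psi_1$) norms, with exponent $c\min\bigl(t^2/(K^2 N),\, t/K\bigr)$ where $K\asymp B$ for bounded variables; substituting $t=c_0\sigma\sqrt{N\log n}$ there yields a sub-Gaussian term of order $\sigma^2\log n/B^2\ge \log^2 n/N$, which is not $\Omega(\log n)$ for large $N$. The variance-sensitive form stated in the lemma thus does not follow verbatim from the cited proposition; it requires exactly the classical (variance-form) Bernstein inequality you invoke. Your identification of the hypothesis $\sigma\ge B\sqrt{\log n/N}$ as the condition placing the deviation in the sub-Gaussian regime is the right way to see why the statement holds, and your constant bookkeeping is clean.
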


\bibliographystyle{IEEEtran}
\bibliography{sparse}

%\vfill

\end{document}